\newcommand{\declarecolor}[2]{\definecolor{#1}{RGB}{#2}\expandafter\newcommand\csname #1\endcsname[1]{\textcolor{#1}{##1}}}
\def\prob#1#2{\mbox{Pr}_{#1}\left[ #2 \right]}
\def\defeq{\stackrel{\mathrm{def}}{=}}
\newcommand\s{f}
\newtheorem{lemma}{Lemma}[section]
\newtheorem{corollary}{Corollary}[section]
\newtheorem{definition}{Definition}[section]
\def\calN{\mathcal{N}}
\def\calX{\mathcal{X}}
\def\calY{\mathcal{Y}}
\def\calD{\mathcal{D}}
\def\calS{\mathcal{S}}
\newcommand\R{\boldsymbol{\mathbb{R}}}
\newcommand\xx{\boldsymbol{\mathit{x}}}
\newcommand{\one}{\mathbf{1}}
\begin{document}
\title{Heterogeneous Calibration: A post-hoc model-agnostic framework for improved generalization}

\author{
  David Durfee%
  \thanks{LinkedIn Corporation.
  Email: \href{mailto:ddurfee@linkedin.com}{ddurfee@linkedin.com}}
  \and
  Aman Gupta%
  \thanks{LinkedIn Corporation.
  Email: \href{mailto:amagupta@linkedin.com}{amagupta@linkedin.com}}
  \and
  Kinjal Basu
  \thanks{LinkedIn Corporation.
  Email: \href{mailto:kbasu@linkedin.com}{kbasu@linkedin.com}}
}
\date{\today}

\maketitle
\thispagestyle{empty}

\begin{abstract}

We introduce the notion of \textit{heterogeneous calibration} that applies a post-hoc model-agnostic transformation to model outputs for improving AUC performance on binary classification tasks.
We consider \textit{overconfident models}, whose performance is significantly better on training vs test data and give intuition onto why they might under-utilize moderately effective simple patterns in the data. 
We refer to these simple patterns as heterogeneous partitions of the feature space and show theoretically that perfectly calibrating each partition separately optimizes AUC. 
This gives a general paradigm of \textit{heterogeneous calibration} as a post-hoc procedure by which heterogeneous partitions of the feature space are identified through tree-based algorithms and post-hoc calibration techniques are applied to each partition to improve AUC. 
While the theoretical optimality of this framework holds for any model, we focus on deep neural networks (DNNs) and test the simplest instantiation of this paradigm on a variety of open-source datasets. 
Experiments demonstrate the effectiveness of this framework and the future potential for applying higher-performing partitioning schemes along with more effective calibration techniques.

\end{abstract}

\section{Introduction}\label{sec:intro} 
 
Deep neural networks (DNNs) have become ubiquitous in decision making pipelines across industries due to an extensive line of work for improving accuracy, and are particularly applicable to settings where massive datasets are common \cite{ He2016, devlin-etal-2019-bert, attention17, DBLP:journals/corr/abs-1906-00091}.
The large number of parameters in DNNs affords greater flexibility in the modeling process for improving generalization performance and has been recently shown to be necessary for smoothly interpolating the data~\cite{bubeck2021universal}.

However this over-parameterization (where the number of parameters exceed the number of training samples), 
along with other factors can lead to \textit{over-confidence}, where model performance is substantially better on training data compared to test data.
For classification tasks, over-confidence is more specifically characterized by the model output probability of the predicted class being generally higher than the true probability.
\citet{Guo2017} found that over-confidence increased with respect to the model depth and width, even when accuracy improves.
Additional recent work proves that over-confidence is also inherent for under-parameterized logistic regression~\cite{Bai2021don}.
Accordingly, there is extensive work in the area of calibration, whose primary goal is to 
improve the accuracy of probability estimates that is essential for different use-cases.

Some of the common calibration techniques  that apply a post-hoc model-agnostic transformation to properly adjust the model output include Platt scaling~\cite{platt1999probabilistic}, Isotonic regression~\cite{zadrozny2002transforming}, histogram binning~\cite{zadrozny2001obtaining}, Bayesian binning into quantiles~\cite{naeini2015obtaining}, scaling-binning~\cite{kumar2019verified}, and Dirichlet calibration ~\cite{kull2019beyond}.
There is also work on calibration through ensemble type methods~\cite{lakshminarayanan2016simple, Gal16} and recent work on using focal loss to train models that are already well-calibrated while maintaining accuracy \cite{mukhoti2020calibrating}. For a more indepth exposure to different calibration methods, please see \citet{Bai2021don}.




In this paper, we deviate from this traditional aim of calibration. Instead of trying to improve accuracy of probability estimates, we aim at improving model generalization. 
Our key insight is that over-confident models not only show mis-calibration but also tend to under-utilize heterogeneity in the data in a specific and intuitive manner.
We develop this intuition through concrete examples, and focus on mitigating this under-utilization of data heterogeneity through efficient post-hoc calibration.
We specifically aim to improve model accuracy, characterized through the area under the receiver-operating characteristic (ROC) curve, (commonly called AUC) and other metrics for binary classification. 

Accordingly, we develop a new theoretical foundation for post-hoc calibration to improve AUC and prove that the transformation for optimizing output probability estimates will also optimize AUC. To the best of our knowledge, this is the first paper that provably shows how calibration techniques can improve model generalization. We further extend this theoretical optimality to separately calibrating different partitions of the heterogeneous feature space, and give concrete intuition how partitioning through standard tree-based algorithms and separately calibrating the partitions will improve AUC.
This gives a natural and rigorous connection between tree-based algorithms and DNNs through the use of standard calibration techniques, and provides an efficient post-hoc transformation framework to improve accuracy.

In order to best show that the underlying theory of our general framework holds in practice, we test the simplest instantiation whereby a decision tree classifier identifies the partitioning and logistic regression is used for calibration.
We test on open-source datasets and focus upon tabular data due to it's inherent heterogeneity, but also discuss how this can be extended to image classification or natural language processing tasks.
Across the different datasets we see a notable increase in performance from our heterogeneous calibration technique on the top performing DNN models. 
In addition, we see much more substantial increase in performance and more stable results while considering the average performing DNNs in the hyper-parameter tuning.
Our experiments also confirm the intuition that more over-confident models will see greater increase in performance from our heterogeneous calibration framework.

We summarize our contributions as the following:
\begin{enumerate}[noitemsep]
\item We use concrete examples to give intuition on how over-confident models, particularly DNNs, under-utilize heterogeneous partitions of the feature space.
\item We provide theoretical justification of correcting this under-utilization through standard calibration on each partition to maximize AUC.
\item We leverage this intuition and theoretical optimality to introduce the general paradigm of \textit{heterogeneous calibration} that applies a post-hoc model-agnostic transformation to model outputs for improving AUC performance on binary classification tasks. This framework also easily generalizes to multi-class classification.
\item We test the simplest instantiation of heterogeneous calibration on open-source datasets to show the effectiveness of the framework
\end{enumerate}

The rest of the paper is organized as follows. We begin with the detailed problem setup in  Section~\ref{sec:notation}. In Section \ref{sec:intuition} we give an intuition as to how over-confident models tend to under-utilize heterogeneity. 
In Section~\ref{sec:auc_calibration} we give a provably optimal post-hoc transformation for mitigating under-utilized heterogeneity. 
In Section~\ref{sec:framework} we detail the framework of heterogeneous calibration.
In Section~\ref{sec:experiments} we give the experimental results before ending with a discussion in Section~\ref{sec:discussion}. All proofs are pushed to the Appendix. 
\section{Methodology}\label{sec:notation} 
In practice, calibration is primarily used to get better probability estimates, which is especially useful for use-cases where uncertainty estimation is important \cite{Bai2021don}. At first glance, the notion of over-confidence being corrected by post-hoc calibration should only contract the range of probability estimates but not affect the ordering and thereby the AUC. Moreover, much of the literature attempts to maintain accuracy within the calibration. 
However if we consider over-confidence at a more granular level it can negatively affect the relative ordering between \textit{heterogeneous partitions}. 
We primarily consider a heterogeneous partitioning to be a splitting of the feature space such that each partition has a disproportionately higher ratio of positive or negative labels for the binary classification setting. Intuitively, the more accurately a model has intricately fit the data the less it needs to utilize simpler patterns, such as heterogeneous partitions, but over-confident models will over-estimate their ability to fit the data and thus under-utilize simple patterns.


Ideally the relative ordering of heterogeneous partitions could be corrected for over-confident models as a post-hoc procedure.
One approach would be to add a separate bias term to the output of each partition, but this may not fully capture the extent to which the relative ordering can be improved.
We give a more rigorous examination of the AUC metric which measures the quality of our output ordering and prove that perfectly calibrating the probability estimates will also optimize the AUC and several other accuracy metrics for the given model.
Furthermore we show that this extends to any partitioning of the feature space such that perfectly calibrating each partition separately will maximally improve AUC and other related metrics.
The concept of separately calibrating partitions of the feature space has also been seen in the fairness literature~\cite{pmlr-v80-hebert-johnson18a}, but their partitions are predefined based upon fairness considerations and the considered metrics are towards ensuring fair models.

Combining our theoretical result with the intuition that over-confident models will improperly account for heterogeneous partitions gives a general framework of \textit{heterogeneous calibration} as a post-hoc model-agnostic transformation that: (1) indentifies heterogeneous partitions of the feature space through tree-based algorithms; (2) calibrates each partition separately using a known technique from the extensive line of calibration literature.

The heterogeneous partitioning can be done through a variety of tree-based methods, and we view this as a natural, efficient, and rigorous incorporation of tree-based techniques into DNNs through the use of calibration.
In fact, our theoretical optimality results also imply that heterogeneous calibration gives the optimal ensemble of a separately trained DNN and decision tree, combining the strengths of each into one model to maximize AUC.

Additionally the advantage of this post-hoc framework as opposed to applying techniques to fix over-confidence within the training itself is that overconfident models are not inherently undesirable with respect to accuracy~\cite{Guo2017}. 
The flexibility of over-parameterization allows the model training to simultaneously learn generalizable patterns and also memorize small portions of the training data. 
Validation data is often used to identify the point at which increased memorization outweighs the additional generalization, but decoupling these prior to this point and still achieving a similar level of performance is incredibly challenging. 
The post-hoc nature of our framework then allows us to avoid this difficulty and enjoy the additional generalization from over-confident models while also correcting the under-utilization of simpler patterns in the data.

\subsection{Notation}

To more rigorously set up the problem, we let $(\calX,\calY)$ be the data universe and we consider the classical binary classification setting where $\calX = \R^d$ and $\calY = \{0,1\}$ and  $(\xx,y) \in \R^d \times \{0,1\}$ is a feature vector and label from the data universe. 
Let $\calD$ be the probability distribution over the data universe with density function $d(\xx,y)$ where our data is $n$ random samples $\{(\xx_i,y_i)\}^n_{i=1} \stackrel{\textnormal{iid}}{\sim} \calD$.
Let $\calD_0$ and $\calD_1$ be the probability distributions over $\calX$ where we condition on the label being 0 and 1 respectively, which is to say that their respective density functions are such that $d_0(\xx) = \frac{d(\xx,0)}{\prob{(\xx,y) \sim \calD}{y=0}}$ and $d_1(\xx) = \frac{d(\xx,1)}{\prob{(\xx,y) \sim \calD}{y=1}}$.

Let $\s:\calX \rightarrow \R$ be the score function of a binary classification model. We consider this to be the output of the final neuron of the DNN prior to applying the sigmoid function, but our theoretical results hold for any score function.

We will be considering splits of the feature space where we let $\Pi = \{\Pi_i \}_{i=1}^m$ be a partitioning of $\calX$ such that each $\Pi_i \subseteq \calX$ where they cover $\calX$, which is to say $\bigcup_{i=1}^m \Pi_i = \calX$ and they are all disjoint so for any $i,j$ we have $\Pi_i \cap \Pi_j = \emptyset$.

We will also refer to \textit{heterogeneous partitions} in the feature space by which we most often mean that either $\prob{(\xx,y) \sim \calD}{y=1|\xx \in \Pi_i} >\!> \prob{(\xx,y) \sim \calD}{y=1}$ or $\prob{(\xx,y) \sim \calD}{y=1|\xx \in \Pi_i} <\!< \prob{(\xx,y) \sim \calD}{y=1}$.

For a more rigorous definition of \textit{over-confidence} we borrow the definitions of~\citet{Bai2021don}, where predicted probability for a given class is generally higher than the estimated probability.
This also leads to the notion of a \textit{well-calibrated model} whereby $s = \prob{(\xx,y) \sim \calD}{y=1 | \s(\xx) = s}$ for all $s$, and we give a more rigorous definition in Section~\ref{subsec:cal_def} for completeness.
For the most part, we will be considering \textit{post-hoc calibration} (which we often shorten to calibration) where a post-hoc transformation is applied to the classifier score function to achieve $t(s) = \prob{(\xx,y) \sim \calD}{y=1 | \s(\xx) = s}$ for all $s$.
Note that this cannot be equivalently defined as requiring $s = \prob{(\xx,y) \sim \calD}{y=1 | t(\s(\xx)) = s}$ because this could be perfectly achieved through setting $t(s) = \prob{(\xx,y) \sim \calD}{y=1}$ for all $s \in \R$ but lose all value of the classifier.

We focus our rigorous examination of accuracy with respect to the area under the curve (AUC) metric, which we precisely define here. 
Generally AUC is considered in terms of the receiver operating characteristic (ROC) curve, which is plotted based upon the True Positive Rate (TPR) vs False Positive Rate (FPR) at different thresholds.
This definition is known to be equivalent to randomly drawing a positive and negative label example and determining the probability that the model will identify the positive label. We also show this equivalence in the appendix for completeness. 

\begin{definition}\label{def:auc}[AUC]
For a given classifier score function $f: \calX \rightarrow \R$, along with distributions $\calD_0$ and $\calD_1$ then AUC can be defined as
\[
AUC(f,\calD_0,\calD_1) = \int_{\xx_0,\xx_1 \in \calX^2} d_0(\xx_0) d_1(\xx_1) \bigg( \one\{\s(\xx_1) > \s(\xx_0)\}  + \frac{1}{2}\one\{\s(\xx_1) = \s(\xx_0)\}  \bigg)
\]
\end{definition}

Note that we will often omit the relevant $dx$ terms for notational simplicity. We also give definitions of related metrics such as TPR, FPR, log-loss, Precision/Recall, and expected calibration error in the Appendix.





In this paper, we will first develop the intuition as to how \textit{over-confident} models tend to under-utilize \textit{heterogeneous partitions} of the feature space. Based on this intuition, the main focus of this paper, is to develop a framework that can leverage this heterogeneity to improve model generalization. Specifically, using a heterogeneous partition $\Pi$, how can we transform the score $s \rightarrow t(s)$ that optimizes AUC for binary classification tasks.

\section{Intuition for over-confident models under-utilizing heterogeneity}\label{sec:intuition} 

In this section we give intuition on why over-confidence due to over-parameterization can negatively impact model performance when there is heterogeneity in the data. 
Note that we will consider binary classification for ease of visualization, but the same ideas generalize to multi-classification where then the output score is a vector in $\R^k$.
We will set up this intuition by visualizing the distribution of scores for the positive and negative labels.
First we will give an example of what these distributions might look like on training vs test data and how they often differ due to over-parameterization.
Then we will consider independently adding a feature with heterogeneity and show how the over-confidence will lead to training data not properly accounting for that heterogeneity.

\subsection{Over-confident model example}

In order to visualize model performance it is common to look at the distributions of the score functions with respect to label. 
Specifically we want to empirically plot $\prob{\xx \sim \calD_0}{\s(\xx) = s}$ and $\prob{\xx \sim \calD_1}{\s(\xx) = s}$ for all $s \in \R$ which is often done by constructing a histogram of the scores with respect to their label. 
For our toy example, suppose our data is such that labels are balanced, so $\prob{(\xx,y) \sim \calD}{y=1} = \prob{(\xx,y)\sim \calD}{y=0}$. Further we will let $\calN(\cdot, \mu, \sigma)$ denote the Gaussian distribution with mean $\mu$ and variance $\sigma^2$.

Generally the over-parameterization of neural networks leads to training data performing significantly better than test data because the model performs some memorization of the training data.
Most often this memorization will occur on the harder data points to classify and better separate these examples compared to the test data.
Visually this tends to then lead to a steeper decline in the respective score distributions for the training data on the harder data points to classify.
Meanwhile for the test data the score distributions are much more symmetric because the model has not performed nearly as well on the hard-to-classify data points leading to more overlap.
An example visualization of over-confidence is in Figure \ref{fig:example}.

\begin{figure}[ht]
\centering
\includegraphics[width=0.45\textwidth]{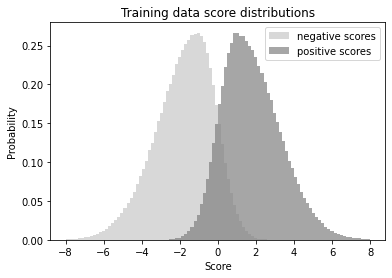}
\includegraphics[width=0.45\textwidth]{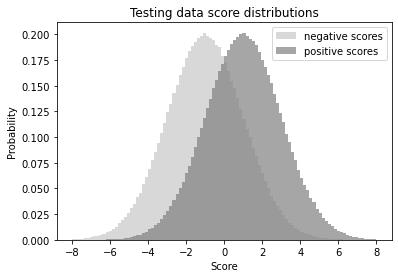}
\caption{Training and test score distributions for over-confident models\label{fig:example}}
\end{figure}

In this example we assume that our classifier function is such that $\prob{\xx \sim \calD_0}{\s(\xx) = s} = \calN(s;-1,{2})$ and $\prob{\xx \sim \calD_1}{\s(\xx) = s} = \calN(s;1,{2})$. Further let $\widehat{\calD}^{train} = \{(\xx_i,y_i)\}^{n}_{i=1} \stackrel{\textnormal{iid}}{\sim} \calD$ be the training data sample and let $\widehat{\calD}^{train}_0$ and $\widehat{\calD}^{train}_1$ be split into the negative and positive labels. In our example we set

\[
\prob{\xx \sim \widehat{\calD}^{train}_1}{\s(\xx) = s} = 
\begin{cases}
\frac{4}{3}\calN(s;1,2) &\text{if } s \geq 1
\\
\frac{2}{3}\calN(s;1,1) &\text{otherwise }
\end{cases}
\]

\[
\prob{\xx \sim \widehat{\calD}^{train}_0}{\s(\xx) = s} = 
\begin{cases}
\frac{4}{3}\calN(s;-1,2) &\text{if } s \leq -1
\\
\frac{2}{3}\calN(s;-1,1) &\text{otherwise }
\end{cases}
\]

This type of over-confidence on training data tends to be the root cause of mis-calibration.
The model does often inherently attempt to optimize calibration, for instance with a log-loss function, but it is doing so on the training data where it is over-confident in how well it has separated positive and negative labels and thus scales up the scores substantially, pushing the associated probabilities closer to 0 or 1.
In order to optimize the log-loss of the test data we would need to divide the score function by a factor of about 2, which would also give approximately optimal calibration. 
Note that the training data is also not optimized as we do assume some sort of regularization such as soft labels because the log loss would actually be optimized on the training data by scaling the score function up by a factor of about 2.
Regardless of how much it is scaled up or down, the ordering of the score function and all associated ordering metrics such as AUC or accuracy will remain unchanged.

\subsection{Under-utilized heterogeneity example}

While the over-confidence in our example above only affects the output probability and not the ordering, this over-confidence can be detrimental to ordering if we add heterogeneity to the data set.
Suppose we add a binary feature to our feature space that is uncorrelated with the other features but is well correlated with the label so it's heterogeneous.
Specifically, if we previously had $\calX = \R^d$ then we now consider $\calX' = \R^{d+1}$ and distribution $\calD'$ with density function $d': \calX' \times \calY \rightarrow [0,1]$ such that for any $(\xx',y)$ such that $\calD'(\xx',y) > 0$ we have $\xx'_{k+1} \in \{0,1\}$.
Further we assume that it's conditionally independent of the other features given the label, but it does well predict the label and in particular we have $\prob{(\xx',y) \sim \calD'}{y=1 | \xx'_{d+1} = 1} = \frac{3}{4}$ and $\prob{(\xx',y) \sim \calD'}{y=0 | \xx'_{d+1} = 0} = \frac{3}{4}$.


Assume that we use the same training and test data but with this new heterogeneous feature added to the dataset.
Due to the new feature being conditionally independent from the other features it's reasonable to assume that the score function the model would learn on the training set would (at least approximately) be $\s'(\xx') = \s(\xx'_{[1:d]}) + w_{d+1} \xx'_{d+1} + b$ for some optimized $w_{d+1}$ and $b$. 

The choice of $w_{d+1}$  determines the relative ordering of the score function when $\xx'_{d+1} = 1$ vs when $\xx'_{d+1} = 0$, so the extent to which the model should utilize the heterogeneity of this binary feature (and then $b$ simply re-centers the score function appropriately).
The better the model is performing the less it will need to use this additional heterogeneity to improve its prediction.
It is then important to note that this $w_{d+1}$ and $b$ are optimized on the training data where the model is over-confident in its performance and as such will not set $w_{d+1}$ nearly as high as it should for the true distribution.

In particular for the training data, it will set $w_{d+1} \approx 1.8$ and $b \approx -0.9$ to optimize cross-entropy which also maximizes AUC on the training data, and on the true data distribution this gives an AUC of about 0.83. 
Due to the over-confidence on the training data the model actually set this value lower than it should have and if instead it had set $w_{k+1} \approx 3.6$ and $b \approx -1.8$ then we could have increased the AUC to about 0.85 on the true data distribution and also improved the log-loss along with other accuracy metrics. 

\begin{figure}[h]
\centering
\includegraphics[width=0.48\textwidth]{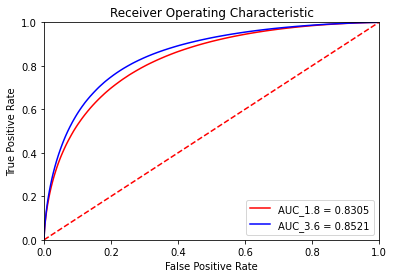}
\caption{Comparison of AUC with different $w_{d+1}$ \label{fig:ALL}}
\end{figure}




\subsection{General Discussion}

Our example above illustrates the more general concept of how neural networks can under-utilize simple patterns in the data because they are over-confident in their ability to fit 
the data.
This point is generally understood as a potential pitfall of neural networks but we are focusing specifically on how it fails to appropriately utilize heterogeneity.


In particular the bias term in the output layer can be viewed as a centering term for the score function 
to optimally account for the balance of positive vs negative labels.
This bias term will not affect the overall ordering, but the neural network can also make these centering decisions at a more fine-grained level where in our example above we considered simply splitting the data once. 
Especially if the internal nodes used a ReLU activation function then it would be quite simple for a neural network to construct internal variables that represent simple partitions of the data, reminiscent of partitions that are similarly defined by decision trees.
This could then lead to relative orderings between partitions that are inappropriate because the model centered the partitions according to the training data on which it was overconfident.




In our example we assumed that the new feature was conditionally independent and thus the appropriate fix was simply shifting each partition. 
With more intricate dependence we would expect the score distributions on each side of the split to differ more significantly than being identical up to a bias term.
Therefore ordering different partitions correctly with respect to the others will be a more complex task.
In Section~\ref{sec:auc_calibration} we show that the optimal way of ordering these partitions relative to each other is actually equivalent to optimally calibrating each partition.

\section{Calibration of partitions to optimize AUC}\label{sec:auc_calibration} 

In Section~\ref{sec:intuition} we provided intuition regarding over-confident models under-utilizing heterogeneity.
In this section we assume that such a heterogeneous partitioning has been identified and provide the theoretical framework for optimally applying a post-hoc transformation to maximize AUC.

\subsection{Optimal AUC calibration}

We first consider applying a post-hoc transformation to the classifier score function, in the same way as standard calibration and define the corresponding AUC measurement. 

\begin{definition}\label{def:cal_auc}[Calibrated AUC]
For a given classifier score function $\s: \calX \rightarrow \R$, along with distributions $\calD_0$ and $\calD_1$, and a transformation function $t: \R \rightarrow \R$ then we define calibrated AUC as

\[
AUC(\s,\calD_0,\calD_1,t) = \int_{s_0,s_1 \in \R^2} \prob{\xx \sim \calD_0}{\s(\xx) = s_0} \prob{\xx \sim \calD_1}{\s(\xx) = s_1} 
 \bigg( \one\{t(s_1) > t(s_0)\} + \frac{1}{2}\one\{t(s_1) = t(s_0)\}  \bigg)    
\]
\end{definition}

Note that when $t$ is the identity function or any isotonic function then this is equivalent to standard AUC. 
Further note that this is equivalent to $AUC(t(\s), \calD_0, \calD_1)$ but this notation will be easier to work with in our proofs for which we will give intuition here and prove in the appendix.

It is then natural to consider the optimal transformation function to maximize AUC conditioned on our classifier score function and data distribution.

\begin{lemma}[Informal]\label{lem:no_partition_likelihood}
Given a classifier score function $\s: \calX \rightarrow \R$ and any distributions $\calD_0,\calD_1$, we can maximize $AUC(\s,\calD_0,\calD_1,t)$ with respect to $t$ by using the likelihood ratio, $t(s) = \frac{\prob{\xx \sim \calD_1}{\s(\xx) = s}}{\prob{\xx \sim \calD_0}{\s(\xx) = s}}$ as our transformation function

\end{lemma}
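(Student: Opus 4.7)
The plan is to reduce the problem to a pointwise optimization over unordered pairs of score values, and then observe that the likelihood ratio induces the optimal ordering at every such pair. Write $p_0(s) \defeq \prob{\xx \sim \calD_0}{\s(\xx) = s}$ and $p_1(s) \defeq \prob{\xx \sim \calD_1}{\s(\xx) = s}$ for the pushforward densities of $\calD_0,\calD_1$ under $\s$, and $\ell(s) \defeq p_1(s)/p_0(s)$ for the proposed likelihood-ratio transformation.

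First I would symmetrize the double integral in Definition~\ref{def:cal_auc} by pairing the contribution at $(s_0,s_1)=(s,s')$ with that at $(s_0,s_1)=(s',s)$ for $s\neq s'$. The combined weight that these two pairs contribute to $AUC(\s,\calD_0,\calD_1,t)$ is
\[
p_0(s)\,p_1(s')\,\one\{t(s')>t(s)\} + p_0(s')\,p_1(s)\,\one\{t(s)>t(s')\} + \tfrac{1}{2}\bigl(p_0(s)\,p_1(s')+p_0(s')\,p_1(s)\bigr)\,\one\{t(s)=t(s')\},
\]
which for any fixed $(s,s')$ is a convex combination of the two numbers $p_0(s)p_1(s')$ and $p_0(s')p_1(s)$; in particular the value at $t(s)=t(s')$ is always weakly dominated by the larger of the two strict orderings. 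The diagonal $s_0=s_1$ contributes the fixed quantity $\tfrac12 p_0(s)p_1(s)$ and is independent of $t$.

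Next I would pick $t$ so that its induced ordering selects the pointwise maximum on (almost) every unordered pair. The comparison $p_0(s')p_1(s)>p_0(s)p_1(s')$ is, whenever $p_0(s),p_0(s')>0$, equivalent to $\ell(s)>\ell(s')$. Hence any $t$ strictly isotonic in $\ell$ (in particular $t=\ell$ itself) simultaneously attains the pair-maximum for every $(s,s')$. Applying Fubini to the double integral in Definition~\ref{def:cal_auc} and integrating the pointwise upper bound then shows $AUC(\s,\calD_0,\calD_1,\ell)\ge AUC(\s,\calD_0,\calD_1,t)$ for every competing $t$, which is the claim.

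The main obstacle will be the measure-theoretic edge cases. For continuous $\calD_0,\calD_1$ the quantities $p_0,p_1$ are densities rather than probabilities, so the diagonal $\{s_0=s_1\}$ needs to be handled as a measure-zero set in the symmetrization step, and we have to deal with $s$ such that $p_0(s)=0$: there we declare $\ell(s)=+\infty$, which is consistent with the pairwise comparison because whenever $p_0(s')>0$ the competing quantity $p_0(s)p_1(s')$ vanishes, so placing $s$ strictly above $s'$ never loses. Rigorously, the cleanest phrasing uses the Radon--Nikodym derivative of the pushforward of $\calD_1$ with respect to that of $\calD_0$ and absorbs the singular part into an $+\infty$ level set; but the pair-by-pair argument and Fubini still carry through, so the likelihood ratio is optimal.
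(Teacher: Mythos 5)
Your proposal is correct and follows essentially the same route as the paper's proof: symmetrize the double integral over unordered pairs, bound each pair's contribution by the maximum of the two cross-products (the paper's Lemma~\ref{lem:cal_auc_max}), and check that the likelihood-ratio ordering attains that maximum pointwise (the paper's Lemma~\ref{lem:cal_auc_order}). The only cosmetic difference is that the paper sidesteps the $p_0(s)=0$ issue by using $p_1(s)/(p_1(s)+p_0(s))$ in place of the raw ratio, where you instead extend $\ell$ to $+\infty$; both resolutions are fine.
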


For the purposes of maximizing AUC only the ordering imposed by $t$ is relevant, and intuitively the likelihood ratio will give the highest ordering to outputs that maximize True Positive Rate (TPR) and minimize False Positive Rate (FPR) thereby maximizing AUC.
Furthermore, we also show that for any FPR the corresponding TPR is maximized by the likelihood ratio transformation, which implies that the ROC curve of any other transformation is contained within the ROC curve of the likelihood ratio transformation.
As a corollary this implies that for any Recall the corresponding Precision is maximized and furthermore the PR-AUC is maximized by the likelihood ratio transformation.
While these claims are intuitively reasonable, they will require more involved proofs in the appendix.











We then show that the ordering from this likelihood ratio is equivalent to the ordering from the optimal calibration which by definition sets $t(s) = \prob{(\xx,y) \sim \calD}{y = 1 | \s(\xx) = s}$.

\begin{lemma}[Informal]\label{lem:no_partition_ordering}
The likelihood ratio and optimal calibration give an equivalent ordering. For any $s,s'$ we have that $\frac{\prob{\xx \sim \calD_1}{\s(\xx) = s}}{\prob{\xx \sim \calD_0}{\s(\xx) = s}} < \frac{\prob{\xx \sim \calD_1}{\s(\xx) = s'}}{\prob{\xx \sim \calD_0}{\s(\xx) = s'}}$ if and only if 
\[\prob{(\xx,y) \sim \calD}{y = 1 | \s(\xx) = s} < \prob{(\xx,y) \sim \calD}{y = 1 | \s(\xx) = s'}
\]
\end{lemma}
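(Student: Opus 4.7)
The plan is to prove the equivalence of orderings by exhibiting an explicit strictly increasing map between the two quantities via Bayes' rule. Let $p := \prob{(\xx,y) \sim \calD}{y=1}$ and $q := 1 - p = \prob{(\xx,y) \sim \calD}{y=0}$, and assume $0 < p < 1$ (otherwise the problem is degenerate and both quantities are constant). Denote the likelihood ratio by
\[
L(s) := \frac{\prob{\xx \sim \calD_1}{\s(\xx) = s}}{\prob{\xx \sim \calD_0}{\s(\xx) = s}}
\]
and the calibrated probability by $C(s) := \prob{(\xx,y) \sim \calD}{y=1 \mid \s(\xx) = s}$.

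First, I would apply Bayes' rule to $C(s)$. Writing the joint density $d(\xx,y)$ in terms of the class-conditionals $d_0, d_1$ and the class priors $p, q$, and pushing forward through $\s$, we get
\[
C(s) \;=\; \frac{p \cdot \prob{\xx \sim \calD_1}{\s(\xx) = s}}{p \cdot \prob{\xx \sim \calD_1}{\s(\xx) = s} + q \cdot \prob{\xx \sim \calD_0}{\s(\xx) = s}}.
\]
Dividing numerator and denominator by $\prob{\xx \sim \calD_0}{\s(\xx) = s}$ (handling separately the measure-zero set where this vanishes, on which $C$ is defined to equal $1$ and $L$ is conventionally $+\infty$) yields
\[
C(s) \;=\; \frac{p\, L(s)}{p\, L(s) + q} \;=\; g(L(s)), \qquad \text{where } g(\ell) := \frac{p\,\ell}{p\,\ell + q}.
\]

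Next I would observe that $g:[0,\infty] \to [0,1]$ is strictly increasing, since its derivative is $g'(\ell) = \frac{pq}{(p\ell + q)^2} > 0$ whenever $p,q > 0$. Therefore $L(s) < L(s')$ if and only if $g(L(s)) < g(L(s'))$, which by the identity above is equivalent to $C(s) < C(s')$. This establishes the claimed iff, and the same argument with equality yields $L(s) = L(s')$ iff $C(s) = C(s')$, so the two transformations induce exactly the same ordering on scores.

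The only subtlety I foresee is the measure-theoretic handling of $s$-values where the denominator $\prob{\xx \sim \calD_0}{\s(\xx)=s}$ vanishes, which I would dispatch by restricting attention to the support of the pushforward of $\calD$ under $\s$ (the set of $s$ that matter for the AUC integrals in Definition~\ref{def:cal_auc}); on the complement, both $L$ and $C$ are irrelevant to the ordering comparisons that drive AUC. Beyond this, the argument is an elementary reduction to the monotonicity of a single variable Möbius-type function, so no further technical obstacle is expected.
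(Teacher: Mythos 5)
Your proof is correct and follows essentially the same route as the paper's: both come down to Bayes' rule expressing $\prob{(\xx,y)\sim\calD}{y=1\mid\s(\xx)=s}$ in terms of the class-conditional score densities, after which the ordering equivalence is elementary algebra (the paper cross-multiplies both inequalities to the common form $p_1(s')p_0(s) < p_1(s)p_0(s')$, which is the same content as your observation that $g(\ell)=p\ell/(p\ell+q)$ is strictly increasing). The only presentational difference is that the paper proves the partitioned generalization (Lemma~\ref{lem:ordering_equivalence}) and obtains this statement as a special case, and its formal optimal transformation uses $p_1/(p_1+p_0)$ rather than the raw ratio to sidestep the division-by-zero issue you also flag.
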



Due to the fact that AUC is invariant under equivalent orderings, calibration on the full dataset will also optimize AUC and other associated metrics.
This connection allows us to simply apply standard techniques from the literature for calibration to optimize AUC.
However, we expect this affect to be minimal even when the model is overconfident because it is more so correcting the over-confident probability estimations but not changing the ordering.

\subsection{Optimal partitioned AUC calibration}

While calibration on the full dataset may not generally affect ordering and thus AUC, recall that Section~\ref{sec:intuition} identified the issue of over-confidence negatively affecting the relative ordering between heterogeneous partitions of the data. 
In order to re-order these partitions appropriately, we then want to extend our optimal post-hoc transformation separately to each partition such that it provably maximizes overall AUC.



\begin{definition}\label{def:partition_cal_auc}[Partition Calibrated AUC]
For a given classifier score function $\s: \calX \rightarrow \R$, distributions $\calD_0,\calD_1$, and a partition $\Pi$ of $\calX$, along with a transformation function $t: \R \times \Pi \rightarrow \R$, we define partition calibrated AUC as
\begin{multline*}
AUC(\s, \calD_0,\calD_1,t, \Pi) = 
\\
\int_{s_0,s_1 \in \R^2} \int_{\Pi_i,\Pi_j \in \Pi^2} \prob{\xx \sim \calD_0}{\s(\xx) = s_0, \xx \in \Pi_i}  \prob{\xx \sim \calD_1}{\s(\xx) = s_1, \xx \in \Pi_j} 
 \bigg( \one\{t(s_1, \Pi_j) > t(s_0, \Pi_i)\}  + \frac{1}{2}\one\{t(s_1, \Pi_j) = t(s_0, \Pi_i)\}  \bigg)
\end{multline*}
\end{definition}

Once again if $t(s,\Pi_i) = s$ always then this is equivalent to $AUC(\s,\calD_0,\calD_1)$. 
Furthermore we could have equivalently defined this as $AUC(t(\s,\Pi),\calD_0,\calD_1)$ but this will be easier to work with in our proofs. 
For this definition we will also show that AUC is maximized by using the likelihood ratio.

\begin{lemma}[Informal]\label{lem:partition_likelihood}
Given classifier score function $\s: \calX \rightarrow \R$, and distributions $\calD_0$ and $\calD_1$, along with a partition $\Pi$ of $\calX$, we can maximize $AUC(\s,\calD_0,\calD_1,t,\Pi)$ by using the likelihood ratio $t(s, \Pi_i) = \frac{\prob{\xx \sim \calD_1}{\s(\xx) = s, \xx \in \Pi_i}}{\prob{\xx \sim \calD_0}{\s(\xx) = s, \xx \in \Pi_i}}$

\end{lemma}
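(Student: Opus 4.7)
My plan is to reduce Lemma \ref{lem:partition_likelihood} to Lemma \ref{lem:no_partition_likelihood} by enriching the score function to carry partition information. Specifically, define $\tilde{f}: \calX \to \R \times \Pi$ by $\tilde{f}(\xx) = (\s(\xx), \Pi(\xx))$, where $\Pi(\xx)$ is the unique element of $\Pi$ containing $\xx$. With this identification, a transformation $t: \R \times \Pi \to \R$ simply acts on the range of $\tilde{f}$, and the partition calibrated AUC of Definition \ref{def:partition_cal_auc} becomes exactly the calibrated AUC (in the sense of Definition \ref{def:cal_auc}) of $\tilde{f}$ under $t$. To verify this, I would expand both definitions: the key observation is that each $\xx$ lies in exactly one $\Pi_i$, so $\prob{\xx \sim \calD_0}{\s(\xx)=s_0,\xx\in\Pi_i}$ serves as a joint density over the enriched value space $\R \times \Pi$, and the double integral over $\R^2 \times \Pi^2$ in Definition \ref{def:partition_cal_auc} is just the iterated integral that appears in the calibrated AUC of $\tilde{f}$.

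Once this reduction is in hand, Lemma \ref{lem:no_partition_likelihood} applied to $\tilde{f}$ states that the AUC-optimal transformation is the likelihood ratio $\sigma \mapsto \frac{\prob{\xx \sim \calD_1}{\tilde{f}(\xx)=\sigma}}{\prob{\xx \sim \calD_0}{\tilde{f}(\xx)=\sigma}}$. Substituting $\sigma = (s, \Pi_i)$ yields exactly the expression $t(s, \Pi_i) = \frac{\prob{\xx \sim \calD_1}{\s(\xx)=s,\xx\in\Pi_i}}{\prob{\xx \sim \calD_0}{\s(\xx)=s,\xx\in\Pi_i}}$ claimed in the lemma. This strategy also immediately inherits the stronger pointwise TPR/FPR and Precision/Recall optimality discussed below Lemma \ref{lem:no_partition_likelihood}, applied to $\tilde{f}$.

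The main obstacle is ensuring that the proof of Lemma \ref{lem:no_partition_likelihood} does not secretly use the fact that scores lie in $\R$; I would need to check that its argument goes through whenever the score takes values in a general measurable space on which $t$ induces a total order. Reviewing typical proofs of this fact, only two things are used: (i) that for any two distinct score values $\sigma, \sigma'$ the contribution of the pair to AUC is proportional to $\prob{\xx \sim \calD_0}{\tilde{f}(\xx)=\sigma} \prob{\xx \sim \calD_1}{\tilde{f}(\xx)=\sigma'}$ weighted by the indicator of the order chosen by $t$, and (ii) that swapping the order of any pair where $t$ disagrees with the likelihood ratio weakly increases AUC. Both properties are purely combinatorial in the score space and do not require $\sigma \in \R$, so the reduction is clean.

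If the reduction turns out to be notationally awkward in the appendix, a backup plan is to prove Lemma \ref{lem:partition_likelihood} directly by the same swapping argument: given any candidate $t^*$, exhibit a modification that strictly increases AUC whenever $t^*$ orders two pairs $(s,\Pi_i), (s',\Pi_j)$ contrary to their likelihood ratios. Ties, which are the one delicate case in such swapping arguments, are already absorbed into the $\frac{1}{2}$ factor of Definition \ref{def:partition_cal_auc}, so the likelihood ordering may break ties arbitrarily without affecting the objective.
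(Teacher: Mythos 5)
Your proposal is correct and is essentially the paper's own argument: the appendix proof likewise treats the pair $(s,\Pi_i)$ as a single enriched score index (it literally ``re-arranges the integrals to pair together an $s$ and $\Pi$''), decomposes the AUC over unordered pairs of such indices, and shows via a pairwise exchange/maximum lemma that the likelihood-ratio ordering maximizes each pair's contribution --- which is exactly your backup plan and also the content your reduction bottoms out in, since the paper only proves the partition case directly and obtains the unpartitioned lemma as the special case (the reverse of your reduction direction). The only cosmetic difference is that the formal version uses $t^*(s,\Pi_i)=p_1(s,\Pi_i)/(p_1(s,\Pi_i)+p_0(s,\Pi_i))$ rather than the raw ratio $p_1/p_0$, to avoid division by zero while inducing the same ordering.
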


Note that we could set $t(s,\Pi_i) = t(s,\Pi_j)$ for all $s,\Pi_i,\Pi_j$ and thus we can only improve (or keep equal) AUC by partitioning, and this holds for any arbitrary partition.
Additionally this likelihood ratio will give the same ordering as the optimal calibration for each partition which for a given $\Pi_i$ would set $t(s,\Pi_i) = \prob{(\xx,y) \sim \calD}{y = 1 | \s(\xx) = s,\xx \in \Pi_i}$.

\begin{lemma}[Informal]\label{lem:partition_ordering}
The likelihood ratio and the optimal calibration probability give an equivalent ordering. For any $s,s' \in \R$ and $\Pi_i,\Pi_j \in \Pi$ we have that $\frac{\prob{\xx \sim \calD_1}{\s(\xx) = s,\xx \in \Pi_i}}{\prob{\xx \sim \calD_0}{\s(\xx) = s,\xx \in \Pi_i}} < \frac{\prob{\xx \sim \calD_1}{\s(\xx) = s',\xx \in \Pi_j}}{\prob{\xx \sim \calD_0}{\s(\xx) = s',\xx \in \Pi_j}}$ if and only if 
\[
\prob{(\xx,y) \sim \calD}{y = 1 | \s(\xx) = s,\xx \in \Pi_i} < \prob{(\xx,y) \sim \calD}{y = 1 | \s(\xx) = s',\xx \in \Pi_j}    
\]
\end{lemma}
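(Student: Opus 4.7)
The plan is to reduce the claim to a statement about a single univariate function: namely, that the posterior probability $\Pr[y=1 \mid \s(\xx)=s, \xx \in \Pi_i]$ depends on the joint class-conditional densities only through the likelihood ratio, and that this dependence is strictly increasing. Once this is established, the ``if and only if'' follows because applying a strictly monotonic map to both sides of an inequality preserves it in both directions.

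First I would fix notation to declutter the expressions. Let $p = \Pr_{(\xx,y) \sim \calD}[y=1]$ and $q = 1-p$, and for each partition element $\Pi_k$ and score value $s$, define $a_k(s) = \Pr_{\xx \sim \calD_1}[\s(\xx)=s, \xx \in \Pi_k]$ and $b_k(s) = \Pr_{\xx \sim \calD_0}[\s(\xx)=s, \xx \in \Pi_k]$. The likelihood ratio in the statement is then $r_k(s) = a_k(s)/b_k(s)$. Applying Bayes' rule to the event $\{\s(\xx)=s, \xx \in \Pi_k\}$ gives
\[
\Pr_{(\xx,y)\sim\calD}[y=1 \mid \s(\xx)=s, \xx \in \Pi_k] \;=\; \frac{p\, a_k(s)}{p\, a_k(s) + q\, b_k(s)} \;=\; \frac{p\, r_k(s)}{p\, r_k(s) + q}.
\]
Call this last expression $g(r_k(s))$, where $g(r) = pr/(pr+q)$.

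Next I would check that $g$ is strictly increasing on $[0,\infty)$, which is a one-line computation: $g'(r) = pq/(pr+q)^2 > 0$ whenever $p, q \in (0,1)$. (The degenerate cases $p \in \{0,1\}$ are trivial, since the posterior is constant and both sides of the equivalence fail to be strict inequalities.) Given strict monotonicity, $r_i(s) < r_j(s')$ if and only if $g(r_i(s)) < g(r_j(s'))$, which is exactly the desired equivalence after substituting the Bayes expression back in.

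The one place that needs a little care is the edge cases where $b_i(s) = 0$, so that $r_i(s)$ is formally infinite, or where both $a_i(s) = 0$ and $b_i(s) = 0$, so that the conditioning event has measure zero. For the former, both sides of the equivalence agree under the convention $r = \infty \Leftrightarrow g(r) = 1$; for the latter, the conditional probability is undefined and the corresponding score value can be ignored since it contributes measure zero to the AUC integral in Definition~\ref{def:partition_cal_auc}. I do not expect any of these to be a real obstacle — the heart of the argument is simply the Bayes-rule manipulation plus monotonicity of $g$ — but stating the convention cleanly is the only bookkeeping worth spelling out in the written proof.
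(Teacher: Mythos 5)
Your proof is correct and takes essentially the same route as the paper: both arguments reduce to the fact that the posterior $\prob{}{y=1 \mid \s(\xx)=s, \xx\in\Pi_i}$ is a strictly increasing function of the likelihood ratio, the paper establishing this by cross-multiplying both inequalities down to the common product inequality $p_1(s',\Pi_j)p_0(s,\Pi_i) < p_1(s,\Pi_i)p_0(s',\Pi_j)$, while you exhibit the monotone map $g(r)=pr/(pr+q)$ explicitly. Your handling of the degenerate cases matches the paper's, which likewise sidesteps division by zero by restating the ratio as $p_1/(p_1+p_0)$ and assuming the conditioning events have positive probability.
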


Therefore, by optimally calibrating each partition we can equivalently maximize overall AUC. If this partitioning is taken to the extreme then this calibration is
just intuitively the optimal model.

\begin{corollary}[Informal]

If $\Pi$ is the full partitioning of $\calX$ which is to say $|\Pi_i| = 1$ for all $i$, then the optimal $t(s,\Pi_i)$ is equivalent to $\prob{(\xx,y) \sim \calD}{y=1 | \xx }$ where $\Pi_i = \xx$ 

\end{corollary}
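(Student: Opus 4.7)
The plan is to specialize Lemmas \ref{lem:partition_likelihood} and \ref{lem:partition_ordering} to the degenerate partition $\Pi = \{\{\xx\} : \xx \in \calX\}$ where every cell is a singleton. For such a partition, the event $\xx' \in \Pi_i$ is just $\xx' = \xx$ for the unique $\xx$ with $\Pi_i = \{\xx\}$, so the extra conditioning on $\s(\xx') = s$ is either redundant (if $\s(\xx) = s$) or conditions on a null event (if $\s(\xx) \neq s$). This collapses all of the partition-indexed conditional expressions into expressions conditioned only on $\xx' = \xx$.

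Concretely, first I would invoke Lemma \ref{lem:partition_likelihood} to get that the optimal transformation is the likelihood ratio $t(s, \Pi_i) = \frac{\prob{\xx' \sim \calD_1}{\s(\xx') = s, \xx' \in \Pi_i}}{\prob{\xx' \sim \calD_0}{\s(\xx') = s, \xx' \in \Pi_i}}$. Second, I would apply Lemma \ref{lem:partition_ordering} to translate this to the ordering induced by the calibration probability $\prob{(\xx',y) \sim \calD}{y=1 \mid \s(\xx') = s, \xx' \in \Pi_i}$. Third, substituting $\Pi_i = \{\xx\}$ reduces this to $\prob{(\xx',y) \sim \calD}{y=1 \mid \xx' = \xx}$, which is exactly the Bayes posterior $\prob{(\xx',y) \sim \calD}{y=1 \mid \xx}$ evaluated at $\xx$.

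The main subtlety, and likely the only real obstacle, is a measure-theoretic one: for continuous $\calD$, the event $\xx' = \xx$ has probability zero, so the conditional probability must be interpreted via densities (or as a Radon-Nikodym derivative). The cleanest way to handle this is to work with the likelihood-ratio form from Lemma \ref{lem:partition_likelihood} directly: the ratio $d_1(\xx)/d_0(\xx)$ is well-defined (wherever $d_0(\xx) > 0$) and by Bayes' rule is a monotone transformation of $\prob{y=1 \mid \xx}$, which is all that is needed since AUC and all of the related ordering metrics from the preceding lemmas depend only on the induced ordering. The word \emph{equivalent} in the corollary statement should therefore be read as \emph{equivalent up to a monotone reparameterization of the score}, matching the sense in which Lemma \ref{lem:partition_ordering} already identifies the likelihood ratio with the calibration probability.

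Finally, I would observe that this gives a sanity check for the general framework: in the extreme case where the partitioning perfectly resolves the feature space, heterogeneous calibration collapses to outputting the Bayes-optimal classifier $\prob{y=1 \mid \xx}$, which is the AUC-optimal score function by a standard Neyman--Pearson argument. No new machinery is needed beyond the two lemmas and a careful reading of the degenerate case.
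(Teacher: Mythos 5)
Your proposal is correct and matches the paper's intended argument: the corollary is presented as an immediate specialization of Lemmas~\ref{lem:partition_likelihood} and~\ref{lem:partition_ordering} to the singleton partition, where conditioning on $\s(\xx') = s,\ \xx' \in \Pi_i$ collapses to conditioning on $\xx' = \xx$, and the paper offers no separate formal proof beyond this. Your added remark on the measure-theoretic reading of the degenerate conditioning (via densities, with ``equivalent'' meaning equivalent up to a monotone reparameterization) is a reasonable clarification of a point the paper glosses over, but it does not change the route.
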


However running post-processing to accomplish the same task as the model training is both redundant and infeasible to be accurately done in this way.
It is then necessary to balance partitioning the feature space and still maintaining enough data to accurately calibrate each partition.
Furthermore, from the intuition we gave before, we aren't simply doing this partitioning in hopes of improvement because of the mathematical guarantee, but because the over-confidence means that our model may not have accounted for specific partitions appropriately.

The granularity to which the feature space can be partitioned and still maintain accuracy has also been studied in the fairness literature giving bounds on the sample complexity for multicalibration~\cite{shabat2020sample}, and there has also been work on estimating calibration of higher moments for multicalibration~\cite{pmlr-v134-jung21a}.
The sample complexity results are agnostic to calibration technique, but for a more practical application the extent of the partitioning should be dependent on the which calibration technique is applied.
For example, histogram binning essentially estimates the full score distributions and will require more samples to keep empirical error low.
In contrast, Platt scaling is just logistic regression on one variable and thus requires fewer samples to get accurate parameters for the calibration.

Additionally, the extent of the partitioning also depends upon whether we use a random forest for our partitioning scheme and take the average calibration over all the trees.
In the same way that the trees can have a greater depth because of the ensemble nature of a random forest, we could take advantage of the same ensemble type structure to partition more finely.
We could also apply tree pruning techniques via cross-validation \cite{kohavi1995study} to determine the ideal level of partitioning.










\section{Heterogeneous Calibration Framework}\label{sec:framework} 








Combining the intuition in Section~\ref{sec:intuition} whereby over-confident models under-utilize heterogeneous partitions, and the theoretical optimality in Section~\ref{sec:auc_calibration} of calibrating each partition separately to maximize AUC, immediately implies the general \textit{heterogeneous calibration} framework:


\begin{enumerate}[noitemsep]
    \item Partition the feature space for maximal heterogeneity
    
    \item Calibrate each partition of the feature space separately
    
    
\end{enumerate}

We give an explicit implementation of this framework in Section~\ref{subsec:simple}, but the flexibility of this paradigm allows for many possible implementations.
In particular, there are a multitude of post-hoc calibration techniques from the literature that could be applied ~\cite{platt1999probabilistic,zadrozny2002transforming,zadrozny2001obtaining,naeini2015obtaining,kumar2019verified,kull2019beyond}. 
Furthermore, splitting the feature space with a decision tree, which greedily maximizes heterogeneity, is the most obvious choice but we could also use 
a random forest here by repeating the partitioning and calibration multiple times and outputting the average across the trees.
We could also utilize boosted trees, which gives a sequence of partitions, and then sequentially apply calibration such that the final transformation was a nested composition of calibrations for each partitioning.
We further sketch out the details of how this could work for boosted trees in the appendix (Section~\ref{sec:boosted}), but leave a more thorough examination to future work.
Additionally, we could construct decision trees that greedily split the feature space to more directly optimize AUC that we discuss in Section~\ref{sec:algorithm}.




We note that
this framework can easily be applied to multi-class classification with many tree-based partitioning schemes and calibration techniques being extendable to multi-class classification.
We also focus upon tabular data and recommender systems because heterogeneity is much more common in these settings, but this framework could be extended to image classification and natural language processing. 
In particular the partitioning of the feature space could be identified by applying a decision tree to the neurons of an internal layer in the neural network, which are often considered to represent more general patterns and thus have more heterogeneity. 

In order to best show that the underlying theory of our general framework holds in practice, we focus on the simplest instantiation and leave the application of higher-performing tree-based partitioning schemes and more effective calibration techniques to future work.


\subsection{Example Implementation}\label{subsec:simple}


To exemplify our framework we give a simple instantiation here which will also be used in our experiments.
A decision tree classifier identifies the partitioning and logistic regression is used for calibration, which is Platt scaling.


We assume that the model is a DNN trained on the training data and the model with the highest accuracy on the validation is chosen, but this assumption is not necessary to apply this algorithm.
Our heterogeneous calibration procedure can use the same training and validation data.
However, by choosing the model with peak accuracy on validation data, it's likely the model is slightly over-confident on the validation data, although much less than the training data, and using fresh data for the calibration would be preferable.




\begin{algorithm}
\caption{Heterogeneous Calibration \label{algo:hetcal}}
\begin{algorithmic}[1]
\REQUIRE Training data $\widehat{\calD}^{train} = \{(\xx_i,y_i)\}^{n_t}_{i=1} \stackrel{\textnormal{iid}}{\sim} \calD$
, validation data
$\widehat{\calD}^{val} = \{(\xx_i,y_i)\}^{n_v}_{i=1} \stackrel{\textnormal{iid}}{\sim} \calD$
, and classifier score function $\s:\calX \rightarrow \R$ from a trained model
\STATE Build low depth classification tree on $\widehat{\calD}^{train}$ whose leaves' generate a partitioning of the feature space, $\Pi$
\FOR{Each partition $\Pi_i \in \Pi$}
\STATE Get label and score pairs for given partition $\calS_{\Pi_i}^{val} = \{(y,\s(\xx)) : (\xx,y) \in \widehat{\calD}^{val}, \xx \in \Pi_i  \}$
\STATE Run Platt scaling (logistic regression) on $\calS_{\Pi_i}^{val}$ to get calibration transformation $t_i$
\ENDFOR
\STATE For feature vector $\xx \in \calX$, use the classification tree to find partition $\Pi_i \subseteq \calX$ such that $\xx \in \Pi_i$ 
\STATE \textbf{Return} Probability prediction $t_i(\s(\xx))$
\end{algorithmic}
\end{algorithm}


This framework will be most effectively applied to real-world use cases under three general conditions: 
\begin{enumerate}[noitemsep]
    \item The model should have some degree of over-confidence in the same way that post-hoc calibration techniques give little additional value to well-calibrated models \item There should be an algorithmically identifiable partitioning of the feature space with a reasonable amount of heterogeneity 
    \item There should be sufficient data outside of the training data to accurately perform calibration on each partition
\end{enumerate}

\subsection{Interpolation between DNNs and tree-based algorithms}\label{subsec:interpolation}

In this section we further discuss how our heterogeneous calibration framework gives a natural interpolation between DNNs and tree-based algorithms through the use of calibration.
In particular, we show how this framework can equivalently be viewed as an optimal ensemble of any given DNN and decision tree through the use of calibration.
Furthermore, we discuss how this can extend to any tree-based algorithm.

We begin by re-considering Algorithm~\ref{algo:hetcal} whereby we could equivalently assume that we have learned a score classifier function $\s : \calX \rightarrow \R$ from a DNN, and also independently have learned a partitioning $\Pi$ through a decision tree classifier on the training data.
Therefore, we have two separate binary classification prediction models for a given feature vector $\xx \in \calX$.
Our DNN will give the probability prediction $\sigma(\s(\xx))$. 
Our decision tree classifier will identify the partition such that $\xx \in \Pi_i$ and return the probability prediction $\prob{(\xx, y) \sim \widehat{\calD}^{val}}{y=1| \xx \in \Pi_i}$.

Next we consider the logistic regression from Algorithm~\ref{algo:hetcal} which is done on each $\Pi_i$
and learns a function $\sigma(a_i x + b_i)$ over $a_i,b_i$.
Our heterogeneous calibration will combine the DNN and the partitioning from the decision tree, $\Pi$, such that for any feature vector $\xx \in \calX$ it will output the probability prediction $\sigma(a_i \s(\xx) + b_i)$ where $\xx \in \Pi_i$.
Note that if our logistic regression learns $a_i = 1$ and $b_i = 0$ for all partitions, then the new model is identical to the original DNN.
Similarly, if the logistic regression learns $a_i = 0$ and $b_i = \sigma^{-1}(\prob{(\xx,y) \sim \widehat{\calD}^{val}}{y = 1 | \xx \in \Pi_i})$ for all partitions, then this new model is equivalent to the original decision tree.
Accordingly, the calibration can then be seen as an interpolation between the DNN and decision tree model.

From our optimality results in Section~\ref{sec:auc_calibration}, we further know that perfect calibration will actually optimize the ensemble of these two models. 
Essentially, the calibration will implicitly pick and choose which strengths of each model to use in order to combine them to maximize AUC.
The natural interpolation of calibration between models equivalently extends to other tree-based algorithms such as random forests and boosted trees (further detail in Section~\ref{sec:boosted}). 
Extending the optimality to these settings should also follow similarly and we leave to future work.
Accordingly, our heterogeneous calibration framework can be equivalently viewed as way to optimally combine independently trained DNNs and tree-based algorithms in a post-hoc manner.
While this may theoretically guarantee an optimal combination, it's again important to note that the extent of partitioning and intricacy of calibration must be balanced with the corresponding empirical error for our framework to be effectively applied in practice.



\section{Experiments}\label{sec:experiments} 

\begin{table*}[h!]
\begin{tabular}{lllllll}
\hline
Size                        & Model                        & Bank Marketing  & Census data & Credit Default & Higgs & Diabetes \\ \hline
S  & Top 3 DNN               & 0.7758           & 0.8976           & 0.7784   & 0.7801     & 0.6915  \\
                            & Top 3 HC           &  0.7816 (+0.76\%)              &   0.9021 (+0.50\%)      &  0.7798 (+0.18\%)      &  0.7816 (+0.19\%)        &   0.6937 (+0.32\%)    \\ \cline{2-7} 
                            & Top 50\% DNN     &   0.7736               &        0.8892        &   0.7771     &    0.7650      &  0.6799     \\
                            & Top 50\% HC &   0.7810 (+0.96\%)             &    0.9004 (+1.27\%)     &   0.7789 (+0.23\%)     &   0.7692 (+0.54\%)       &  0.6879 (+1.18\%)     \\ \hline \hline
M & Top 3 DNN               & 0.7712           & 0.8978           & 0.7787    & 0.7773      & 0.6744   \\
                            & Top 3 HC           &  0.7800 (+1.14\%)     &         0.9027 (+0.55\%)      &  0.7794 (+0.09\%)       &  0.7799 (+0.33\%)        &  0.6856 (+1.66\%)    \\ \cline{2-7} 
                            & Top 50\% DNN    &   0.7690     &           0.8858     &  0.7775      &  0.7617        &   0.6683    \\
                            & Top 50\% HC &  0.7793 (+1.34\%)   &              0.9009 (+1.70\%)  &  0.7790 (+0.20\%)      &    0.7680 (+0.83\%)      &   0.6841 (+2.37\%)    \\ \hline \hline
L  & Top 3 DNN              & 0.7716           & 0.9007            & 0.7783    & 0.7747      & 0.6679   \\
                            & Top 3 HC           &    0.7814 (+1.27\%)            &    0.9027 (+0.22\%)            &  0.7794 (+0.14\%)      &   0.7775 (+0.36\%)       &  0.6824 (+2.17\%)    \\ \cline{2-7} 
                            & Top 50\% DNN     &  0.7663              &            0.8800    &  0.7772      &        0.7596  &   0.6637    \\
                            & Top 50\% HC & 0.7779 (+1.52\%)               &      0.9010 (+2.38\%)          & 0.7789 (+0.23\%)       & 0.7666 (+0.92\%)         &  0.6824 (+2.82\%)    \\ \hline \hline
\end{tabular}
\caption{Test AUC-ROC (mean of 5 runs) on different datasets before and after calibration. DNN = Deep neural network, HC = Heterogeneous calibration. We report model performance on the top 3 variants as well as the top 50\% variants for each model, where top 3 and top 50\% is determined by DNN performance prior to HC.}
\label{tab:main_results}
\end{table*}

\begin{table*}[!ht]
\begin{tabular}{lllllll}
\hline
                     & Model                        & Bank Marketing  & Census data & Credit Default & Higgs data  & Diabetes \\ \hline
  & Top 3 Reg DNN               & 0.7758           & 0.8976           & 0.7781   & 0.7801     & 0.6693  \\
                            & Top 3 Reg HC           &  0.7816 (+0.76\%)              &   0.9021 (+0.50\%)     &  0.7793 (+0.16\%)      &  0.7816 (+0.19\%)        &  0.6829 (+2.04\%)    \\ \hline 
                            & Top 3 Unreg DNN     &   0.7735               &        0.8773        &   0.7768     &    0.7498      &  0.6915     \\
                            & Top 3 Unreg HC &   0.7804 (+0.88\%)             &   0.8985 (+2.42\%)     &  0.7787 (+0.25\%)     &   0.7588 (+1.20\%)       &  0.6937 (+0.32\%)     \\ \hline \hline
\end{tabular}
\caption{Test effect of regularization on AUC-ROC (mean of 5 runs) on different datasets before and after calibration for the small MLPs. DNN = Deep neural network, HC = Heterogeneous calibration, Reg = regularized model, Unreg = unregularized model. Top 3 variants are chosen using the procedure mentioned in Table~\ref{tab:main_results}. Table~\ref{tab:reg_results_appendix} in Appendix~\ref{sec:experiments_app} contains more results for medium and large networks.}
\label{tab:reg_results}
\end{table*}

We evaluate the efficacy of heterogeneous calibration on the task of binary classification using deep neural networks on a variety of datasets. We make observations about the effect of model size, regularization and training set size on the effectiveness of the technique. All experiments were conducted using Tensorflow~\cite{tensorflow2015-whitepaper}.

\textbf{Datasets:} We use datasets containing a varying number of data points and types of features. For each dataset, we create training, validation (for tuning neural networks), calibration (for training post-hoc calibration models) and test splits. Specifically, we use the following 5 datasets:
\begin{itemize}
    \itemsep-0.1em 
    \item Bank marketing~\cite{moro2014data} -  Marketing campaign data to predict client subscriptions. $\sim 45k$ datapoints.
    \item Census Income~\cite{kohavi1996scaling} - Data to predict whether income exceeds a threshold or not. $\sim 49k$ datapoints.
    \item Credit Default~\cite{yeh2009comparisons} - Data to predict credit card default. $\sim 30k$ datapoints.
    \item Higgs~\cite{baldi2014searching} - Data to distinguish between a Higgs boson producing signal process and a background process. We chose $\sim 50k$ datapoints out of the entire set.
    \item Diabetes~\cite{strack2014impact} - Data about readmission outcomes for diabetic patients. $\sim 100k$ datapoints.
\end{itemize}

Further details about the datasets, including features, splits and pre-processing information, can be found in Appendix~\ref{app:datasets}.

\textbf{Modeling details:} We use multilayer perceptrons with 3 feed-forward layers. To understand the effect of model size and model regularization on calibration performance, we vary the number of neurons in each MLP layer and also toggle regularization techniques like batch normalization~\cite{ioffe2015batch} and dropout~\cite{srivastava2014dropout}. Specifically, we choose 3 MLP sizes based on the number of parameters in each. We use the Adam optimizer~\cite{kingma2014adam} and extensively tune the learning rate on a log-scaled grid for each variant, since even adaptive optimizers can benefit from learning rate tuning~\cite{loshchilov2017decoupled}. Complete details about the MLP variants, regularization techniques and tuning of the learning rate can be found in Appendix~\ref{app:modelhp}. 

For heterogeneous calibration, we train a decision tree classifier on the training set to partition the feature space and subsequently use Platt scaling~\cite{platt1999probabilistic} on the calibration dataset for each partition. We lightly tune the tree hyperparameters, details in Appendix~\ref{app:dt_and_platt_scaling}. Note that extensive tuning or using a different partitioning or calibration algorithm could have led to further improvements for our method. 

\subsection{Main results}

Table~\ref{tab:main_results} displays the main results. We choose 3 MLP sizes based on the number of parameters and label them small, medium and large, based on the number of parameters. For each MLP size, we choose the top 3 and top 50\% variants after extensive tuning of learning rate and regularization and report the mean of 5 runs.

We note that our method provides a consistent lift in AUC across all model sizes and datasets, despite the use of a simple calibration model. This meshes well with our hypothesis that modern neural networks, despite regularization, are overconfident and calibration can be used as a simple post-hoc technique to improve generalization performance.

\begin{figure}[ht]
\centering
\includegraphics[width=0.47\textwidth]{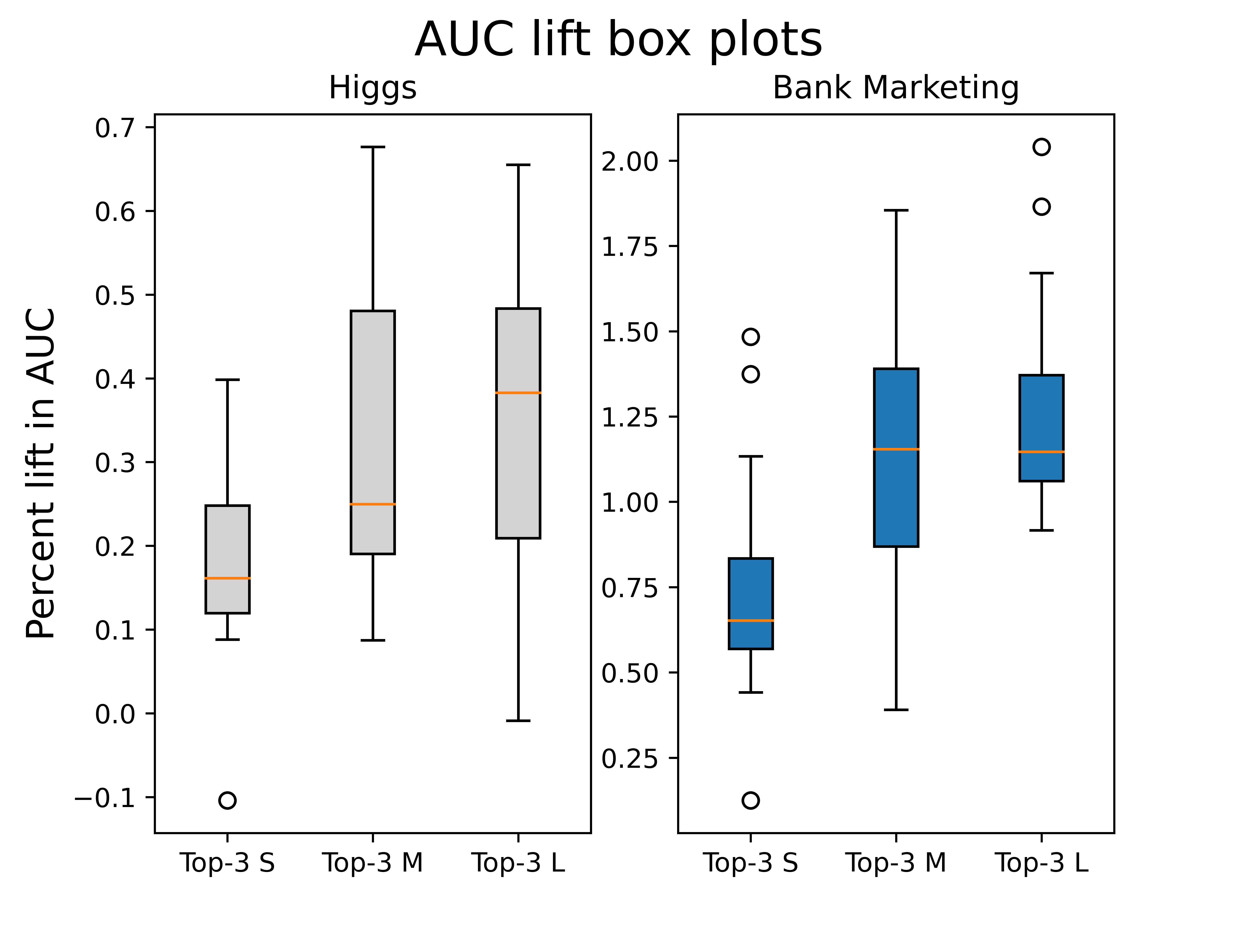}
\vspace{-5mm}
\caption{Box plots of test AUC lifts of our method for various runs of the top 3 models. We note a consistent lift in AUC across runs and hyperparameter settings.}
\label{fig:main_box_plots}
\end{figure}

Figure~\ref{fig:main_box_plots} contains box plots of the test AUC lift provided by our method for 2 datasets. The plots contain lifts 5 different runs of the top 3 models for each setting. We observe a consistent lift in AUC across various runs and hyperparameter settings, demonstrating the consistency of our method. We include box plots for other datasets in Appendix~\ref{app:more_results}.

\textbf{Effect of model size.} From Table~\ref{tab:main_results}, we note that as we go to larger models, the lift in performance for our method consistently increases for all datasets. This corroborates our hypothesis and intuition that larger models can be more overconfident, and hence may benefit more from our method.

\subsection{Effect of model regularization} Table~\ref{tab:reg_results} contains the results of the effect of heteregeneous calibration on regularized (use of dropout or batch normalization) and unregularized MLPs of small size. Unsurprisingly, our method provides larger relative lift in performance for unregularized DNNs as compared to regularized DNNs. This fits well with our hypothesis that unregularized networks are highly overconfident, and may benefit from methods such as ours.

\subsection{Computational efficiency}

We note that hyperparameter tuning is critical for improving generalization performance. Model performance varies widely with the tuning of hyperparameters. For our experiments, we tuned the learning rate. Interestingly, we note that our method has a much tighter variance for AUC across a large range of the learning rate, when compared to an uncalibrated network. This was particularly notable for the Census data where our technique maintained high performance even when the uncalibrated network performance dipped. 
This may reduce the need for extensive hyperparameter tuning.

\section{Discussions}\label{sec:discussion} 


In this paper we developed the framework of heterogeneous calibration that utilizes data heterogeneity and post-hoc calibration techniques for improving model generalization for over-confident models. We theoretically proved  that the calibration transformation is optimal in improving AUC. 
To show its efficacy in practice, we focus on the simplest instantiation, but this framework can naturally apply combinations of known higher-performing techniques for both the partitioning and calibration.
We believe further investigation into these applications of the framework would be an interesting and fruitful future direction now that we have established the efficacy of our heterogeneous calibration paradigm.

We further showed that our framework equivalently uses calibration to optimally combine a DNN and decision tree as a post-hoc ensemble method. 
This should extend to other tree-based algorithms in the same manner, but a more rigorous examination would be an interesting future direction. 
This investigation could also include a more thorough characterization of when the AUC increases most for this optimal combination of DNNs and tree-based algorithms.
This would potentially be used in determining how to train DNNs to focus on learning patterns that are not identifiable through tree-based algorithms and then utilize the heterogeneous calibration framework to achieve a higher-performing combination. 



Our experiments also showed much more consistent high performance of the model with heterogeneous calibration applied as we searched through the hyper-parameters.
We think another interesting future direction would be to further investigate the extent to which heterogeneous calibration can serve as a replacement for hyper-parameter tuning. 

\section{Acknowledgements}

We thank our colleagues Joojay Huyn, Varun Mithal, Preetam Nandy, Jun Shi, and Ye Tu for their helpful feedback and illuminating discussions.

\vspace{-0.30cm}
\bibliography{references}

\section{Partition Calibrated AUC Proofs}\label{sec:auc_proofs} 

In this section we provide the missing proofs of the informal Lemmas~\ref{lem:no_partition_likelihood},~\ref{lem:no_partition_ordering},~\ref{lem:partition_likelihood}, and~\ref{lem:partition_ordering}. Note that Lemmas~\ref{lem:partition_likelihood} and~\ref{lem:partition_ordering} are the more general case of the former two respectively, so we will only prove each of these.


We will copy the definition of partition calibrated AUC here for reference.

\begin{definition}\label{def:partition_cal_auc2}[Partition Calibrated AUC]
For a given classifier score function $\s: \calX \rightarrow \R$, and distributions $\calD_0$ and $\calD_1$, along with a partition $\Pi$ of $\calX$, and a transformation function $t: \R \times \Pi \rightarrow \R$, we define partition calibrated AUC as

\begin{multline*}
AUC(\s, \calD_0,\calD_1,t, \Pi) = 
\\
\int_{s_0,s_1 \in \R^2} \int_{\Pi_i,\Pi_j \in \Pi^2} \prob{\xx \sim \calD_0}{\s(\xx) = s_0, \xx \in \Pi_i}  \prob{\xx \sim \calD_1}{\s(\xx) = s_1, \xx \in \Pi_j} 
 \bigg( \one\{t(s_1, \Pi_j) > t(s_0, \Pi_i)\}  + \frac{1}{2}\one\{t(s_1, \Pi_j) = t(s_0, \Pi_i)\}  \bigg)
\end{multline*}

\end{definition}

Our formalized version Lemma~\ref{lem:partition_likelihood} can then be stated as such. 

\begin{lemma}\label{lem:opt_transform_partition}
For a given classifier score function $\s: \calX \rightarrow \R$, and distributions $\calD_0$ and $\calD_1$, along with a partition $\Pi$ of $\calX$, let $t^*:\R \times \Pi \rightarrow [0,1]$ be the transformation 

\[
t^*(s, \Pi_i) = \frac{\prob{\xx \sim \calD_1}{\s(\xx) = s, \xx \in \Pi_i}}{\prob{\xx \sim \calD_1}{\s(\xx) = s, \xx \in \Pi_i} + \prob{\xx \sim \calD_0}{\s(\xx) = s, \xx \in \Pi_i}} 
\]

and if ${\prob{\xx \sim \calD_1}{\s(\xx) = s, \xx \in \Pi_i} + \prob{\xx \sim \calD_0}{\s(\xx) = s, \xx \in \Pi_i}} = 0$ then we let $t^*(s, \Pi_i) = 0$.

For any function $t: \R \times \Pi \rightarrow \R$ we have $AUC(\s, \calD_0,\calD_1,t^*, \Pi) \geq AUC(\s, \calD_0,\calD_1,t, \Pi)$

\end{lemma}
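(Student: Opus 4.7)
The plan is to rewrite the double integral in Definition~\ref{def:partition_cal_auc2} in a symmetrized form and then argue that the maximum over $t$ can be attained pointwise by any transformation whose ordering agrees with the likelihood ratio. For notational brevity I would abbreviate a cell as $c = (s, \Pi_i)$, write $p_b(c) \defeq \prob{\xx \sim \calD_b}{\s(\xx) = s, \xx \in \Pi_i}$ for $b \in \{0,1\}$, and let $r(c) \defeq p_1(c)/p_0(c)$. A first useful observation is that $t^*$ is a strictly increasing function of $r$ (namely $t^*(c) = r(c)/(1+r(c))$ when $p_0(c) > 0$), so showing that any transformation consistent with the ordering induced by $r$ is AUC-optimal immediately gives the lemma.

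The key computation is the symmetrization. Using the identity $\one\{b > a\} + \tfrac{1}{2}\one\{b = a\} = \tfrac{1}{2}(1 + \sgn{b-a})$ (with $\sgn{0} = 0$), the AUC expression reduces to
\[
AUC(\s, \calD_0, \calD_1, t, \Pi) \;=\; \tfrac{1}{2} + \tfrac{1}{2}\int p_0(c_0) p_1(c_1)\, \sgn{t(c_1) - t(c_0)}\, dc_0\, dc_1.
\]
Swapping the dummy variables $c_0 \leftrightarrow c_1$ and using antisymmetry of $\sgn$ then yields
\[
AUC(\s, \calD_0, \calD_1, t, \Pi) \;=\; \tfrac{1}{2} + \tfrac{1}{4}\int \bigl[p_0(c_0) p_1(c_1) - p_0(c_1) p_1(c_0)\bigr]\, \sgn{t(c_1) - t(c_0)}\, dc_0\, dc_1.
\]
For each fixed pair $(c_0, c_1)$ the integrand is maximized by choosing $\sgn{t(c_1) - t(c_0)}$ to equal $\sgn{p_0(c_0) p_1(c_1) - p_0(c_1) p_1(c_0)}$, and on cells where both $p_0$ values are positive, dividing by the positive quantity $p_0(c_0) p_0(c_1)$ shows this sign coincides with $\sgn{r(c_1) - r(c_0)}$. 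Since $\sgn{t^*(c_1) - t^*(c_0)} = \sgn{r(c_1) - r(c_0)}$ by the monotonicity already noted, the choice $t = t^*$ attains the pointwise maximum everywhere, hence the global maximum.

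The main obstacle is bookkeeping the degenerate cells. If $p_0(c) + p_1(c) = 0$ then $c$ contributes zero mass to either marginal and can be ignored in the integral; the convention $t^*(c) = 0$ is harmless. If $p_0(c) = 0$ but $p_1(c) > 0$, then $r(c) = +\infty$ and $t^*(c) = 1$, the maximum value of $t^*$, so $t^*$ still correctly places $c$ above every cell with finite likelihood ratio, matching the sign of the symmetrized integrand in those pairings. A second, more minor subtlety is that the cells range over a mixed continuous--discrete space — continuous score $s$ crossed with a discrete partition index — so the ``density'' notation $p_b(c)$ should be interpreted against the appropriate product measure, but the symmetrization and pointwise maximization go through verbatim in this setting.
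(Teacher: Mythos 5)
Your proof is correct and follows essentially the same pair-by-pair strategy as the paper's: the symmetrization via $\sgn{\cdot}$ is exactly the paper's grouping of the double integral over unordered pairs of cells (using that the two indicator terms for a pair sum to one), and the pointwise sign-matching step is precisely the paper's ordering lemma showing that $t^*$ attains the per-pair maximum. The $\sgn{\cdot}$ identity is a slightly slicker way to carry out the same bookkeeping, and your handling of the degenerate and zero-mass cells matches the paper's conventions.
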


Note that we slightly deviate from the likelihood ratio in Lemma~\ref{lem:partition_likelihood} to avoid divide by zero concerns but the optimal transformation in the lemma gives essentially an equivalent ordering to the likelihood ratio.

The notation will become too onerous here so we will let $p_0(s_0,\Pi_i) \defeq \prob{\xx \sim \calD_0}{\s(\xx) = s_0, \xx \in \Pi_i}$ and similarly for other subscripts. We will further let $T((s_1,\Pi_j), (s_0,\Pi_i)) \defeq \one\{t(s_1, \Pi_j) > t(s_0, \Pi_i)\} + \frac{1}{2}\one\{t(s_1, \Pi_j) = t(s_0, \Pi_i)\} $

\begin{proof}

We re-arrange the integrals to pair together an $s$ and $\Pi$ which allows us to re-write the AUC as 

\[
\int_{s_0,\Pi_i \in \R \times \Pi} \int_{s_1,\Pi_j \in \R \times \Pi} p_0(s_0,\Pi_i) p_1(s_1,\Pi_j) 
\cdot T((s_1,\Pi_j), (s_0,\Pi_i))
\]

We could then further consider each pair $(s_0,\Pi_i)$ and $(s_1,\Pi_j)$ and re-write the AUC as

\begin{multline*}
\int_{s,\Pi_i \in \R \times \Pi} p_0(s,\Pi_i) p_1(s,\Pi_i) T((s,\Pi_i), (s,\Pi_i))
\\
+ \int_{(s_0,\Pi_i) \neq (s_1, \Pi_j) \in (\R \times \Pi)^2} 
 p_0(s_0,\Pi_i) p_1(s_1,\Pi_j) T((s_1,\Pi_j), (s_0,\Pi_i)) 
 + p_0(s_1,\Pi_j) p_1(s_0,\Pi_i) T((s_0,\Pi_i), (s_1,\Pi_j))
\end{multline*}

The first integral does not change regardless of the choice of $t$. 
From Lemma~\ref{lem:cal_auc_order} we have that the inequality is tight in Lemma~\ref{lem:cal_auc_max} for all $\Pi_i,\Pi_j$ and $s_0,s_1$ when $t \equiv t^*$, and therefore the second integral is maximized with $t^*$.

\end{proof}

We utilize the following helper lemma that consider two pairs of scores and partition and gives an upper bound on their sum for both possible orderings.

\begin{lemma}\label{lem:cal_auc_max}

For any $\Pi_i,\Pi_j$ and $s_0,s_1$ we have 

\begin{multline*}
p_0(s_0,\Pi_i) p_1(s_1,\Pi_j) T((s_1,\Pi_j), (s_0,\Pi_i))  +
p_0(s_1,\Pi_j) p_1(s_0,\Pi_i) T((s_0,\Pi_i), (s_1,\Pi_j)) 
\\
 \leq \max\{  p_0(s_0,\Pi_i) p_1(s_1,\Pi_j) ,  p_0(s_1,\Pi_j) p_1(s_0,\Pi_i) \}
\end{multline*}

\end{lemma}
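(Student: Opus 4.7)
The plan is to reduce the inequality to a simple convex-combination bound. First I would introduce the shorthand $A \defeq p_0(s_0,\Pi_i) p_1(s_1,\Pi_j)$ and $B \defeq p_0(s_1,\Pi_j) p_1(s_0,\Pi_i)$ for the two density products, and $T_1 \defeq T((s_1,\Pi_j),(s_0,\Pi_i))$ and $T_2 \defeq T((s_0,\Pi_i),(s_1,\Pi_j))$ for the two indicator-based quantities. The claim then becomes $A\, T_1 + B\, T_2 \leq \max(A,B)$.

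The key observation is that $T_1 + T_2 = 1$ identically. I would verify this by a three-way case analysis on the comparison between $t(s_1,\Pi_j)$ and $t(s_0,\Pi_i)$: if $t(s_1,\Pi_j) > t(s_0,\Pi_i)$ then $T_1=1, T_2=0$; if $t(s_1,\Pi_j) < t(s_0,\Pi_i)$ then $T_1=0, T_2=1$; and if $t(s_1,\Pi_j)=t(s_0,\Pi_i)$ then $T_1=T_2=\tfrac12$. In every case the two half-indicators contributed by the tie-breaking term in the definition of $T$ combine to give sum exactly $1$, and $T_1, T_2 \geq 0$.

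Using $T_2 = 1 - T_1$, the left-hand side rewrites as $A\, T_1 + B(1-T_1) = B + (A-B)\,T_1$. Without loss of generality assume $A \geq B$, so $A - B \geq 0$; since $T_1 \in [0,1]$, we have $(A-B)T_1 \leq A-B$, hence $B + (A-B)T_1 \leq A = \max(A,B)$. The symmetric case $B \geq A$ is handled by rewriting $A T_1 + B T_2 = A + (B-A)T_2$ and bounding $T_2 \leq 1$. Substituting back the definitions of $A,B,T_1,T_2$ yields the stated inequality.

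The argument involves no real obstacle: the only subtlety is confirming that the half-indicator construction in $T$ makes $T_1+T_2$ equal to $1$ (rather than just $\leq 1$) even in the tie case, since this is exactly what lets the bound be tight and matches the role of $t^*$ in the parent proof of Lemma~\ref{lem:opt_transform_partition}. Non-negativity of $p_0, p_1$ is also implicitly used so that $A, B \geq 0$ and the convex-combination step is valid.
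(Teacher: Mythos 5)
Your proof is correct and uses exactly the same key fact as the paper, namely that $T((s_1,\Pi_j),(s_0,\Pi_i)) + T((s_0,\Pi_i),(s_1,\Pi_j)) = 1$ together with non-negativity of all quantities involved; the paper simply asserts this in one line while you spell out the convex-combination step. No substantive difference in approach.
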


\begin{proof}
This follows immediately from the fact that 

\[
 T((s_1,\Pi_j), (s_0,\Pi_i))  +  T((s_0,\Pi_i), (s_1,\Pi_j))   = 1
\]

and both terms are non-negative.

\end{proof}

We also utilize another helper lemma that shows an equivalent ordering for our considered optimal transformation function with respect to pairs of scores and partitions.

\begin{lemma}\label{lem:cal_auc_order}
For any $\Pi_i,\Pi_j$ and $s_0,s_1$, if $  p_0(s_0,\Pi_i) p_1(s_1,\Pi_j)  <  p_0(s_1,\Pi_j) p_1(s_0,\Pi_i) $ then 

\[
\frac{p_1(s_1,\Pi_j)}{p_1(s_1,\Pi_j) + p_0(s_0,\Pi_j)} < \frac{p_1(s_0,\Pi_i)}{p_1(s_0,\Pi_i) + p_0(s_1,\Pi_i)} 
\]

\end{lemma}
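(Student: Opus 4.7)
My plan is to reduce the target inequality to the hypothesis by direct cross-multiplication, with the only subtlety being a degenerate case that is resolved by the zero-denominator convention fixed in Lemma~\ref{lem:opt_transform_partition}. The driving algebraic fact is the identity $\frac{a}{a+b}<\frac{c}{c+d}\iff ad<bc$, valid whenever $a,b,c,d\ge 0$ and both denominators are strictly positive, which is exactly the shape of the two fractions in the lemma.

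First I would dispose of the degenerate case. The hypothesis $p_0(s_0,\Pi_i)\,p_1(s_1,\Pi_j) < p_0(s_1,\Pi_j)\,p_1(s_0,\Pi_i)$ forces the right-hand product to be strictly positive, hence both $p_0(s_1,\Pi_j)$ and $p_1(s_0,\Pi_i)$ are nonzero; in particular the sum in the right-hand fraction's denominator is strictly positive, so the right-hand side of the target is itself strictly positive. If the left-hand denominator sum vanishes, the convention from Lemma~\ref{lem:opt_transform_partition} sets the left-hand value to $0$ and the strict inequality is immediate. Otherwise both denominators are strictly positive and I pass to the generic case.

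In the generic case, I would introduce the abbreviations $a=p_1(s_1,\Pi_j)$, $b=p_0(s_1,\Pi_j)$, $c=p_1(s_0,\Pi_i)$, $d=p_0(s_0,\Pi_i)$, pairing the $p_0$ and $p_1$ factors of each denominator at the same $(s,\Pi_k)$ as the corresponding numerator, in line with the optimal transformation $t^{*}(s,\Pi_k)=p_1(s,\Pi_k)/\bigl(p_1(s,\Pi_k)+p_0(s,\Pi_k)\bigr)$ identified in Lemma~\ref{lem:opt_transform_partition}. Under this substitution the target inequality becomes $\frac{a}{a+b}<\frac{c}{c+d}$, which by cross-multiplication is equivalent to $ad<bc$; and the hypothesis becomes precisely $ad<bc$. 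The two reductions coincide, so the proof closes in a single line.

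The main obstacle, as I see it, is not the algebra but the index bookkeeping in the denominators. Read strictly literally, the $p_0$ terms in the two denominators carry the \emph{opposite} $s$-index from their $p_1$ numerators, and that literal reading would force the post-cross-multiplication inequality to involve index pairings that never appear in the hypothesis. The natural reading, and the only one consistent with the definition of $t^{*}$ in Lemma~\ref{lem:opt_transform_partition}, is that each denominator is evaluated at the same $(s,\Pi_k)$ as its numerator; I would present the proof with the indices normalized in this way and explicitly flag the reading at the start of the argument so that the one-line reduction to the hypothesis is unambiguous.
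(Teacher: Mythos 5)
Your proof is correct and takes essentially the same route as the paper's: the paper likewise derives positivity of both denominator sums from the hypothesis (otherwise the right-hand product would vanish) and then performs the same one-line algebra, adding $p_1(s_0,\Pi_i)\,p_1(s_1,\Pi_j)$ to both sides and factoring, which is exactly your cross-multiplication read in reverse. Your normalization of the denominator indices — each $p_0$ evaluated at the same $(s,\Pi_k)$ as its $p_1$ numerator, matching $t^*$ — is the intended reading; the mismatched indices in the lemma statement are typos, as the paper's own proof confirms.
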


\begin{proof}
If $  p_0(s_0,\Pi_i) p_1(s_1,\Pi_j)  <  p_0(s_1,\Pi_j) p_1(s_0,\Pi_i) $ then we must have $p_0(s_0,\Pi_i) + p_1(s_0,\Pi_i) > 0$ and $p_0(s_1,\Pi_j) + p_1(s_1,\Pi_j) > 0$ because otherwise 
$p_0(s_1,\Pi_j) p_1(s_0,\Pi_i) = 0$ contradicting our assumed inequality.

By adding $p_1(s_0,\Pi_i) p_1(s_1,\Pi_j)$ to both sides, our assumed inequality can then be equivalently written

\[
\bigg(p_0(s_0,\Pi_i) + p_1(s_0,\Pi_i) \bigg) p_1(s_1,\Pi_j) <
\bigg(p_0(s_1,\Pi_j) + p_1(s_1,\Pi_j) \bigg) p_1(s_0,\Pi_i)
\]

and dividing each side gives the desired inequality.

\end{proof}




\subsection{Ordering equivalence}

We further show that the ordering of the optimal transformation is equivalent to the ordering given where each partition is perfectly calibrated.

We keep the notation for the lemma statement the same but will switch to shorthand for the proof where we let $p_0(\cdot) \defeq \prob{\xx \sim \calD_0}{\cdot}$, $p_1(\cdot) \defeq \prob{\xx \sim \calD_1}{\cdot}$, and $p(\cdot) \defeq \prob{(\xx,y) \sim \calD}{\cdot}$. 

\begin{lemma}\label{lem:ordering_equivalence}
Given distributions $\calD, \calD_0, \calD_1$ and our classifier score function $\s: \calX \rightarrow \R$ and a partition $\Pi$ of $\calX$. For any $s,s' \in \R$ and $\Pi_i,\Pi_j \in \Pi$ where $\prob{(\xx,y) \sim \calD}{\s(\xx) = s,\xx \in \Pi_i} > 0$ and $\prob{(\xx,y) \sim \calD}{\s(\xx) = s', \xx \in \Pi_j} > 0$, then 

\[
   \prob{(\xx,y) \sim \calD}{y=1 |  \s(\xx) = s', \xx \in \Pi_j} <
    \prob{(\xx,y) \sim \calD}{y=1 |  \s(\xx) = s, \xx \in \Pi_i} 
\]

if and only if 

\[
\frac{\prob{\xx \sim \calD_1}{\s(\xx) = s', \xx \in \Pi_j}}{\prob{\xx \sim \calD_1}{\s(\xx) = s', \xx \in \Pi_j}+\prob{\xx \sim \calD_0}{\s(\xx) = s', \xx \in \Pi_j}} 
< \frac{\prob{\xx \sim \calD_1}{\s(\xx) = s, \xx \in \Pi_i}}{\prob{\xx \sim \calD_1}{\s(\xx) = s, \xx \in \Pi_i}+\prob{\xx \sim \calD_0}{\s(\xx) = s, \xx \in \Pi_i}} 
\]

\end{lemma}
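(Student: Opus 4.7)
The plan is to reduce both sides of the biconditional to a common monotone function of the likelihood ratio $L(s,\Pi) \defeq p_1(s(\xx)=s, \xx \in \Pi) / p_0(s(\xx)=s, \xx \in \Pi)$. The calibration posterior on the left and the ``normalized likelihood'' on the right are each strictly increasing transformations of this single scalar, so the biconditional follows by chasing inequalities through those transformations.

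Concretely, the first step is Bayes' rule. Since $\calD_0$ and $\calD_1$ are the conditionals of $\calD$ on $y=0$ and $y=1$, for any event $A \subseteq \calX$ we have $p(A, y=1) = \alpha \cdot p_1(A)$ and $p(A, y=0) = (1-\alpha)\cdot p_0(A)$, where $\alpha \defeq p(y=1)$. Applying this to $A = \{\s(\xx) = s, \xx \in \Pi_i\}$ yields
\[
p(y=1 \mid \s(\xx) = s, \xx \in \Pi_i) = \frac{\alpha\, p_1(s, \Pi_i)}{\alpha\, p_1(s, \Pi_i) + (1-\alpha)\, p_0(s, \Pi_i)},
\]
and the hypothesis $p(\s(\xx)=s, \xx \in \Pi_i) > 0$ ensures the denominator is strictly positive (and symmetrically for the pair $(s', \Pi_j)$). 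The second step is a monotonicity check: for fixed $\alpha \in (0,1)$ the map $r \mapsto \alpha r / (\alpha r + (1-\alpha))$ is strictly increasing on $[0,\infty]$, and likewise $r \mapsto r/(r+1)$ is strictly increasing. Hence both sides of the biconditional are strictly increasing functions of $L(s,\Pi_i)$ (respectively $L(s',\Pi_j)$), so their strict orderings agree.

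The main obstacle is not algebraic difficulty but bookkeeping around degeneracy. First, if the prior $\alpha$ is $0$ or $1$ the problem is vacuous, so we may assume $\alpha \in (0,1)$. Second, the ratio $L$ is formally singular when $p_0(s,\Pi_i) = 0$; rather than wrestle with infinities I would treat this case directly by observing that the positivity hypothesis then forces $p_1(s,\Pi_i) > 0$, so both the calibration posterior and the normalized likelihood equal $1$, and symmetrically they equal $0$ when $p_1(s,\Pi_i) = 0$. Outside these corner cases a single cross-multiplication suffices, giving both directions of the iff in one line.
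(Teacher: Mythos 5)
Your proof is correct and follows essentially the same route as the paper's: both hinge on the Bayes-rule decomposition $p(y=1,\s(\xx)=s,\xx\in\Pi_i) = \alpha\, p_1(\s(\xx)=s,\xx\in\Pi_i)$ (with $\alpha=p(y=1)$), after which the biconditional reduces to the same cross-multiplied inequality $p_1(s',\Pi_j)p_0(s,\Pi_i) < p_1(s,\Pi_i)p_0(s',\Pi_j)$. The only presentational difference is that you organize the algebra via a monotone-in-the-likelihood-ratio argument (both sides being strictly increasing images of $L = p_1/p_0$), whereas the paper performs the cross-multiplications directly; this is cosmetic, and your explicit bookkeeping of the $p_0=0$ and $\alpha\in\{0,1\}$ corner cases, which the paper leaves implicit in its positivity hypothesis, is a mild strengthening of the exposition rather than a new idea.
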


\begin{proof}
By the definition of conditional probability we have 

\[
p(y=1 |  \s(\xx) = s, \xx \in \Pi_i) 
= 
\frac{p(y=1,\s(\xx) = s, \xx \in \Pi_i)}{p(y=1,\s(\xx) = s, \xx \in \Pi_i) + p(y=0,\s(\xx) = s, \xx \in \Pi_i)}
\]

Plugging this in to the first inequality in our if and only if statement, we then cross multiply and cancel like terms to get 

\begin{multline*}
p(y=1 |  \s(\xx) = s', \xx \in \Pi_j) < 
p(y=1 |  \s(\xx) = s, \xx \in \Pi_i)
\\
\Longleftrightarrow
\\ 
p(y = 1, \s(\xx) = s', \xx \in \Pi_j)p(y = 0, \s(\xx) = s, \xx \in \Pi_i) <
\\
p(y=1, \s(\xx) = s, \xx \in \Pi_i)p(y=0, \s(\xx) = s', \xx \in \Pi_j)
\end{multline*}

By our definitions we have $p(\s(\xx) = s, \xx \in \Pi_i | y=1) = p_1(\s(\xx) = s, \xx \in \Pi_i) $ which then implies $p(y=1,\s(\xx) = s, \xx \in \Pi_i) = p(y=1) p_1({\s(\xx) = s, \xx \in \Pi_i})$ and applying this and cancelling gives

\begin{align*}
p({y=1 |  \s(\xx) = s', \xx \in \Pi_j}) &< 
p({y=1 |  \s(\xx) = s, \xx \in \Pi_i})
\\& 
\Longleftrightarrow
\\ 
p_1({\s(\xx) = s', \xx \in \Pi_j})p_0({\s(\xx) = s, \xx \in \Pi_i}) &<
p_1({\s(\xx) = s, \xx \in \Pi_i})p_0({\s(\xx) = s', \xx \in \Pi_j})
\end{align*}

Furthermore by taking the second inequality in our desired if and only if statement, then cross multiplying and cancelling like terms we equivalently get

\begin{align*}
\frac{p_1({\s(\xx) = s', \xx \in \Pi_j})}{p_1({\s(\xx) = s', \xx \in \Pi_j}) + p_0({\s(\xx) = s', \xx \in \Pi_j})} &< 
\frac{p_1({\s(\xx) = s, \xx \in \Pi_i})}{p_1({\s(\xx) = s, \xx \in \Pi_i})+p_1({\s(\xx) = s, \xx \in \Pi_i})} 
\\& 
\Longleftrightarrow
\\
p_1({\s(\xx) = s', \xx \in \Pi_j})p_0({\s(\xx) = s, \xx \in \Pi_i}) &<
p_1({\s(\xx) = s, \xx \in \Pi_i})p_0({\s(\xx) = s',  \xx \in \Pi_j})
\end{align*}

\end{proof}

\section{Calibrated FPR and TPR Proofs}\label{sec:fpr_tpr} 

In this section we show that the same transformation function that optimizes calibrated AUC will also optimize TPR with respect to FPR. 
In particular, we show that the ROC curve for the optimal transformation function will always contain the ROC curve for any other transformation function.
As a corollary we obtain that the Precision with respect to Recall is also optimized and thus the PR-AUC is maximized.
We begin by defining calibrated TPR and FPR.

\begin{definition}\label{def:calibrated_tpr}[Calibrated TPR]
For a given classifier score function $\s: \calX \rightarrow \R$, and distribution $\calD_1$, along with a transformation function $t: \R \rightarrow \R$, and some $T \in \R$ and $q \in [0,1]$, we define calibrated TPR as

\[
    TPR_t(T,q) = 
    \int_{s \in \R}  \prob{\xx \sim \calD_1}{\s(\xx) = s} (\one\{t(s) > T\} + q \one\{t(s) = T\})
\]

\end{definition}

\begin{definition}\label{def:calibrated_fpr}[Calibrated FPR]
Defined identically to TPR but using distribution $\calD_0$

\end{definition}

The value of $q$ is necessary here because if $\prob{\xx \sim \calD_1}{\s(\xx) = s}$ is not continuous over $\R$ then $FPR^{-1}(x)$ may not be defined for all $x \in (0,1)$ and we want our statements to generalize over all probability distributions and score classifier functions.
Note that when $t$ is the identity function then this is just the standard definition for TPR and FPR.
As is well-known we can equivalently define AUC using TPR and FPR and we will prove the calibrated version here as well that follows equivalently.

\begin{lemma}
For a given classifier score function $\s: \calX \rightarrow \R$, along with distributions $\calD_0$ and $\calD_1$, and a transformation function $t: \R \rightarrow \R$ 

\[
AUC(\s,\calD_0,\calD_1,t) = \int_0^1 TPR_t(FPR_t^{-1}(x)) dx
\]

\end{lemma}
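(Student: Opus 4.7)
The plan is to view the right-hand side as the area swept under the ROC curve parametrized by $x \in [0,1]$ and convert it back to the pair-wise expression in Definition~\ref{def:cal_auc} by a change of variables. Let $\mu_0$ and $\mu_1$ denote the pushforward distributions of $t \circ \s$ under $\calD_0$ and $\calD_1$ respectively, so that $FPR_t(T,q) = \mu_0((T,\infty)) + q \cdot \mu_0(\{T\})$ and analogously for $TPR_t$. Since $FPR_t$ is non-increasing in $T$ and linear in $q$, the convention for $FPR_t^{-1}(x)$ must be that it returns a pair $(T_x, q_x)$ with $FPR_t(T_x, q_x) = x$, where $T_x$ is non-increasing in $x$ and $q_x$ sweeps from $0$ to $1$ across atoms of $\mu_0$.

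Next I would split $[0,1]$ into the set $A$ of $x$-values corresponding to atoms of $\mu_0$ and its complement. Outside of $A$, the substitution $x = FPR_t(T,0)$ yields $dx = -d\mu_0(T)$, contributing $\int TPR_t(T, 1/2) \, d\mu_0(T)$ restricted to the continuous part of $\mu_0$ (the choice of $q$ is immaterial there since atoms of $\mu_1$ are $\mu_0$-null almost everywhere on the continuous part). Inside $A$, each atom at threshold $T$ with mass $m = \mu_0(\{T\})$ corresponds to an interval of length $m$ in $x$ along which $q_x$ varies linearly from $0$ to $1$; since $TPR_t(T,q)$ is linear in $q$, integration over this interval gives $m \cdot TPR_t(T, 1/2)$. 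Combining the two contributions shows that the right-hand side equals $\int_{\R} TPR_t(T, 1/2) \, d\mu_0(T)$, with the factor of $1/2$ arising naturally as the average of $q$ over $[0,1]$ at atoms.

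Finally, I would expand the definition of $TPR_t(T,1/2)$ and apply Fubini to swap the integrals, then unpack $\mu_0$ and $\mu_1$ back into integrals against $\prob{\xx \sim \calD_0}{\s(\xx)=s_0}$ and $\prob{\xx \sim \calD_1}{\s(\xx)=s_1}$. This recovers exactly the double integral in Definition~\ref{def:cal_auc}, with the indicator $\one\{t(s_1) > t(s_0)\}$ coming from the strict $TPR_t$ term and the $\tfrac{1}{2}\one\{t(s_1) = t(s_0)\}$ coming from the tie-weighted term.

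The main obstacle will be the careful accounting of atoms --- both in $\mu_0$, where the $q$-interval must be swept, and in $\mu_1$, where the $\tfrac{1}{2}$ factor correctly weights ties. The $q$ parameter in the TPR and FPR definitions is designed precisely to linearize the jumps in $FPR_t$ at atoms of $\mu_0$, making $FPR_t^{-1}$ well-defined as a single-valued map on $[0,1]$ and automatically producing the tie-breaking weight in the pair-wise AUC; once this bookkeeping is handled, the remainder is a routine change-of-variables computation.
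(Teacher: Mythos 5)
Your proposal is correct, and it is essentially the paper's argument run in the opposite direction. The paper starts from the pairwise double integral of Definition~\ref{def:cal_auc}, inserts the identity $\tfrac{1}{2}=\int_0^1 q\,dq$ to replace the tie weight, applies Fubini, recognizes the inner integral as $TPR_t(t(s_0),q)$, and pushes forward to the threshold variable $T$ to land on $\int_0^1 TPR_t(FPR_t^{-1}(x))\,dx$; you instead start from the ROC integral and change variables $x=FPR_t(T,q)$ back to the pairwise form. The substantive difference is bookkeeping: the paper's forward insertion of the $q$-integral lets it never explicitly separate the atomic and continuous parts of the pushforward of $t\circ\s$ under $\calD_0$, whereas your reverse direction forces that decomposition, with the atoms of $\mu_0$ producing the $q$-sweep intervals and linearity in $q$ yielding the $TPR_t(T,\tfrac12)$ average. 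Both hinge on exactly the same fact --- averaging $q$ uniformly over $[0,1]$ reproduces the $\tfrac12$ tie weight --- so your version buys nothing extra but is a legitimate, slightly more laborious route; your parenthetical that the (countably many) atoms of $\mu_1$ are null for the continuous part of $\mu_0$ is the one point that needs to be said carefully, and you do say it.
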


\begin{proof}

\begin{align*}
AUC(\s,\calD_0,\calD_1,t) & = 
\int_{s_0,s_1 \in \R^2} \prob{\xx \sim \calD_0}{\s(\xx) = s_0} \prob{\xx \sim \calD_1}{\s(\xx) = s_1} 
\left( \one\{t(s_1) > t(s_0)\} + \frac{1}{2}\one\{t(s_1) = t(s_0)\}  \right)
\\&
= \int_{s_0,s_1 \in \R^2} \prob{\xx \sim \calD_0}{\s(\xx) = s_0} \prob{\xx \sim \calD_1}{\s(\xx) = s_1} 
 \int_0^1 \left( \one\{t(s_1) > t(s_0)\} + q\one\{t(s_1) = t(s_0)\}  \right) dq
\\&
= \int_0^1  \int_{s_0 \in \R} \prob{\xx \sim \calD_0}{\s(\xx) = s_0}
 \int_{s_1 \in \R} \prob{\xx \sim \calD_1}{\s(\xx) = s_1} 
 \big( \one\{t(s_1) > t(s_0)\} + q\one\{t(s_1) = t(s_0)\}  \big) dq
\\&
= \int_0^1  \int_{s_0 \in \R} \prob{\xx \sim \calD_0}{\s(\xx) = s_0} TPR_t(t(s_0),q) dq
\\&
= \int_0^1  \int_{-\infty}^{\infty} \prob{\xx \sim \calD_0}{t(\s(\xx)) = T} TPR_t(T,q) dT dq
\\& 
= \int_0^1 TPR_t(FPR_t^{-1}(x)) dx
\end{align*}

\end{proof}

We then show that the transformation used to optimize AUC will also maximize TPR with respect to any given FPR, and the equivalence in the ordering from Lemma~\ref{lem:ordering_equivalence} implies then that perfectly calibrating the output probability estimates will also have this same property.

Recall that the general intuition here was that our optimal transformation will give the highest ordering to outputs that maximize the probability of being drawn from $\calD_1$ vs $\calD_0$ which should then maximize TPR and minimize FPR.
The proof here will be more involved, primarily due to the analysis of integrals ruling out analyzing through a greedy ordering, but will still apply this underlying intuition.

\begin{lemma}\label{lem:fpr_tpr_main}

For any $x \in [0,1]$ we have $TPR_{t^*}(FPR_{t^*}^{-1}(x)) \geq TPR_t(FPR_t^{-1}(x))$ where $t^*$ is defined as

\[
t^*(s) \defeq 
\frac{\prob{\xx \sim \calD_1}{\s(\xx) = s}}{\prob{\xx \sim \calD_1}{\s(\xx) = s} + \prob{\xx \sim \calD_0}{\s(\xx) = s}} 
\] 

and if ${\prob{\xx \sim \calD_1}{\s(\xx) = s} + \prob{\xx \sim \calD_0}{\s(\xx) = s}} = 0$ then let $t^*(s) = 0$.

\end{lemma}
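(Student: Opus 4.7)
The plan is to apply a Neyman--Pearson-style argument. The key observation is that $t^*(s)=\frac{p_1(s)}{p_0(s)+p_1(s)}$ (using the shorthand $p_0,p_1$ for the two score densities) is a monotone increasing function of the likelihood ratio $p_1(s)/p_0(s)$: specifically $t^*(s)\geq T^*$ if and only if $p_1(s)\geq c^*\,p_0(s)$ for $c^*=T^*/(1-T^*)$ (the edge cases $T^*\in\{0,1\}$ are handled separately, and the ``zero denominator'' convention in the definition of $t^*$ forces $s$ with $p_0(s)=p_1(s)=0$ into the lower level set). So thresholding on $t^*$ is equivalent to thresholding on the likelihood ratio, putting us in the classical setting where Neyman--Pearson applies.

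Fix $x\in[0,1]$ and pick $T,q$ so that $\mathrm{FPR}_t(T,q)=x$, and $T^*,q^*$ so that $\mathrm{FPR}_{t^*}(T^*,q^*)=x$. The first step is to encode the (possibly randomized) rejection rules as a measurable ``indicator'' function on $\R$:
\[
\phi_t(s)=\one\{t(s)>T\}+q\,\one\{t(s)=T\},\qquad \phi_{t^*}(s)=\one\{t^*(s)>T^*\}+q^*\,\one\{t^*(s)=T^*\}.
\]
With this notation, $\mathrm{FPR}_t(T,q)=\int \phi_t(s)\,p_0(s)\,ds$ and $\mathrm{TPR}_t(T,q)=\int \phi_t(s)\,p_1(s)\,ds$, and likewise for $t^*$. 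The FPR constraint becomes $\int (\phi_{t^*}-\phi_t)\,p_0=0$.

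The crux is then the pointwise inequality
\[
\bigl(\phi_{t^*}(s)-\phi_t(s)\bigr)\bigl(p_1(s)-c^*\,p_0(s)\bigr)\;\geq\;0\qquad\text{for a.e.\ }s.
\]
To see this, split on the sign of $\phi_{t^*}(s)-\phi_t(s)$. If this is positive then $\phi_{t^*}(s)>0$, so $t^*(s)\geq T^*$, which by the monotone translation gives $p_1(s)\geq c^*p_0(s)$. If it is negative then $\phi_t(s)>0$ and $\phi_{t^*}(s)<1$, so $t^*(s)\leq T^*$ and hence $p_1(s)\leq c^*p_0(s)$. Integrating the pointwise inequality in $s$ yields
\[
\int(\phi_{t^*}-\phi_t)\,p_1\;\geq\;c^*\int(\phi_{t^*}-\phi_t)\,p_0\;=\;0,
\]
which is exactly $\mathrm{TPR}_{t^*}(T^*,q^*)\geq \mathrm{TPR}_t(T,q)$, as desired.

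The main obstacle is confirming that $T^*$ and $q^*$ realizing the prescribed FPR actually exist and that the level set $\{s:t^*(s)=T^*\}$ behaves correctly; I would handle this by showing $\mathrm{FPR}_{t^*}(T,0)$ is right-continuous and nonincreasing in $T$ while $\mathrm{FPR}_{t^*}(T,1)$ is left-continuous, so by choosing $T^*=\inf\{T:\mathrm{FPR}_{t^*}(T,0)\leq x\}$ and then $q^*$ to interpolate, we always obtain $\mathrm{FPR}_{t^*}(T^*,q^*)=x$. A minor additional check is needed on the set where $p_0(s)+p_1(s)=0$, on which $t^*$ is set to $0$ by convention: this set has zero measure under both $\calD_0$ and $\calD_1$, so it contributes nothing to either TPR or FPR and can be ignored in the pointwise inequality. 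The corollary for PR-AUC then follows because Precision is monotone in TPR at fixed FPR (after converting between FPR and recall scale), and PR-AUC is the integral of a pointwise-dominated curve.
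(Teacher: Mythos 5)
Your proof is correct and takes a genuinely different, cleaner route than the paper. The paper establishes the lemma by contradiction: it first assumes (without justification) that a TPR-optimal transformation $t$ exists at the given FPR level $x$, then proceeds through a multi-way case analysis on whether $\calS_t^{=}\cap\calS_{t^*}^{>}$ is empty, a proper subset of $\calS_t^{=}$, or equal to $\calS_t^{=}$, constructing in each non-trivial case a perturbed $t'$ that strictly improves TPR at fixed FPR (via the helper Lemmas~\ref{lem:fpr_tpr_simple_helper}--\ref{lem:fpr_tpr_helper2}), thereby contradicting optimality. Your approach instead packages each randomized threshold rule as a test function $\phi$, notes that $t^*$ is a monotone reparameterization of the likelihood ratio $p_1/p_0$, and invokes the Neyman--Pearson pointwise inequality $(\phi_{t^*}-\phi_t)(p_1 - c^* p_0)\geq 0$, which integrates directly to the result under the FPR-matching constraint. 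This buys you three things: it compares $t^*$ to an arbitrary $t$ directly, so it does not require the existence of an optimal $t$ (a gap in the paper's argument); it collapses the paper's multiple sub-cases into a single pointwise sign check; and it makes the role of the likelihood-ratio ordering transparent, which the paper instead recovers indirectly through its set-decomposition lemmas. The edge cases you flag ($T^*\in\{0,1\}$, and the zero-density convention) are real but harmless, exactly as you describe: they affect only sets of measure zero under the score distributions, or can be handled by replacing the finite $c^*$ with the appropriate limiting inequality. One small caveat: your existence argument for $(T^*,q^*)$ should be carried out with a bit of care since $\mathrm{FPR}_{t^*}(\cdot,0)$ need only be nonincreasing and right-continuous (not continuous), but the monotone-interpolation construction you sketch is standard and sound.
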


\begin{proof}

For notational simplicity we will let $p_0(s) \defeq \prob{\xx \sim \calD_0}{\s(\xx) = s} $ and $p_1(s) \defeq \prob{\xx \sim \calD_1}{\s(\xx) = s} $. 
In order to avoid divide by zero concerns we will also let $\calX_{+} \defeq \{ s \in \R: p_0(s) + p_1(s) > 0\}$ which is just the union of the support of each.
 For a given $t: \R \rightarrow \R$ and some $T \in \R$ and $q \in [0,1]$, we define $\calS_t^{>} = \{ s \in \calX_{+}: t(s) > T, p_1(s) > 0 \}$ and $\calS_t^{=} = \{ s \in \calX_{+}: t(s) = T, p_1(s) > 0 \}$. We can then re-write the definition of calibrated TPR as 

\[
TPR_t(T,q) = \int_{s \in \calS_t^{>}}  p_1(s) + q \int_{s \in \calS_t^{=}} p_1(s)
\]

and equivalently for $FPR_t(T,q)$ with $\calD_0$. 
Given our value of $x \in [0,1]$, suppose we have some function $t: \R \rightarrow \R$ such that $TPR_{t^*}(FPR_{t^*}^{-1}(x)) < TPR_t(FPR_t^{-1}(x))$ and we can further assume that for any $t': \R \rightarrow \R$ we have $TPR_{t'}(FPR_{t'}^{-1}(x)) \leq TPR_t(FPR_t^{-1}(x))$ because there must be some such optimal function for $x$.  In order to prove our claim it then suffices to show  a contradiction on either of these inequalities.

We will let $T \in \R$ and $q \in [0,1]$ be such that $FPR_t^{-1}(x) = (T,q)$ and $T^* \in \R$ and $q^* \in [0,1]$ be such that $FPR_{t^*}^{-1}(x) = (T^*,q^*)$. Accordingly we will again define let $\calS_t^{>} = \{ s \in \calX_{+}: t(s) > T \}$ and $\calS_t^{=} = \{ s \in \calX_{+}: t(s) = T \}$ and $\calS_{t^*}^{>} = \{ s \in \calX_{+}: t^*(s) > T^* \}$ and $\calS_{t^*}^{=} = \{ s \in \calX_{+}: t^*(s) = T^*\}$. 

We will first consider the case of  $\calS_t^{=} \cap \calS_{t^*}^{>} \neq \emptyset$ and $q \in (0,1)$ and show a contradiction on the optimality of $t$ for the given $x$.  We consider two cases, if $\calS_t^{=} \cap \calS_{t^*}^{>} = \calS_t^{=}$ and $\calS_t^{=} \cap \calS_{t^*}^{>} \subset \calS_t^{=}$.

If $\calS_t^{=} \cap \calS_{t^*}^{>} \subset \calS_t^{=}$ then let $\calS_{+} \defeq \calS_t^{=} \cap \calS_{t^*}^{>}$ and $\calS_{-} \defeq \calS_t^{=} \setminus \calS_{t^*}^{>}$ and by construction of $t^*$ we must then have $\frac{p_1(s)}{p_1(s) + p_0(s)} > \frac{p_1(s')}{p_1(s') + p_0(s')}$ for any $s \in \calS_{+}$ and $s' \in \calS_{-}$.
From Lemma~\ref{lem:fpr_tpr_simple_helper} there must then be $q_1 > q_2$ such that $q_1 \int_{s \in \calS_{+}} p_0(x) + q_2 \int_{s \in \calS_{-}} p_0(s) = q ( \int_{s \in \calS_{+}} p_0(s) + \int_{s \in \calS_{-}} p_0(x)) $ such that either $q_1 = 1$ and $q_2 \geq 0$ or $q_2 = 0$ and $q_1 \leq 1$. In the case of $q_1 = 1$ we will define $t' : \R \rightarrow \R$ to be equivalent to $t$ except that $t'(s) > T$ for any $s \in \calS_{+}$. We then have $FPR_{t'}^{-1}(x) = (T,q_2)$ and by Lemma~\ref{lem:fpr_tpr_helper} we have $TPR_{t'}(T,q_2) > TPR_t(T,q)$ giving a contradiction on the optimality of $t$. Similarly if $q_2 = 0$ we will define $t' : \R \rightarrow \R$ to be equivalent to $t$ except that $t'(s) < T$ for any $s \in \calS_{-}$. We then have $FPR_{t'}^{-1}(x) = (T,q_1)$ and by Lemma~\ref{lem:fpr_tpr_helper} we have $TPR_{t'}(T,q_2) > TPR_t(T,q)$ giving a contradiction on the optimality of $t$.

If $\calS_t^{=} \cap \calS_{t^*}^{>} = \calS_t^{=}$, then define $\calS_{-} \defeq \calS_t^{>} \setminus \calS_{t^*}^{>}$ and again by construction of $t^*$ we must then have $\frac{p_1(s)}{p_1(s) + p_0(s)} > \frac{p_1(s')}{p_1(s') + p_0(s')}$ for any $s \in \calS_{+}$ and $s' \in \calS_{-}$.
Furthermore $\calS_{-} \neq \emptyset$ because otherwise we would have $FPR_t(T,q) < \int_{s \in \calS_{t^*}^{>}} p_0(s) \leq FPR_{t^*}(T^*,q^*) = x$ because we assumed $q<1$ and this contradicts $FPR_t^{-1}(x) = (T,q)$. 
From Lemma~\ref{lem:fpr_tpr_simple_helper2} there must then be some $q' \in (q,1)$ such that $q \int_{s \in \calS_{+}} p_0(x) + \int_{s \in \calS_{-}} p_0(s) = q' ( \int_{s \in \calS_{+}} p_0(s) + \int_{s \in \calS_{-}} p_0(x)) $. 
Once again we define $t' : \R \rightarrow \R$ to be equivalent to $t$ except that $t'(s) = T$ for any $s \in \calS_{-}$. 
We then have $FPR_{t'}^{-1}(x) = (T,q')$ and by Lemma~\ref{lem:fpr_tpr_helper} we have $TPR_{t'}(T,q') > TPR_t(T,q)$ giving a contradiction on the optimality of $t$.

We then consider the case of $\calS_t^{=} \cap \calS_{t^*}^{>} = \emptyset$ or $q \in \{0,1\}$ and we will show that $TPR_{t^*}(T^*,q^*) \geq TPR_{t}(T,q)$  in these cases which will complete the proof. We first consider the case of $\calS_t^{=} \cap \calS_{t^*}^{>} = \emptyset$ and note that because $FPR_{t^*}(T^*,q^*)  =  FPR_t(T,q)$ by construction we must also have 

\[
\int_{s \in \calS_{t}^{>} \setminus \calS_{t^*}^{>} } p_0(s) + q \int_{s \in \calS_{t}^{=} } p_0(s)  = 
\int_{s \in \calS_{t^*}^{>} \setminus \calS_{t}^{>} } p_0(s) + q^* \int_{s \in \calS_{t^*}^{=} } p_0(s)
\]

because the summation over $\calS_{t}^{>} \cap \calS_{t^*}^{>}$ cancels out. It then further implies that $TPR_{t}(T,q) \leq TPR_{t^*}(T^*,q^*)$ if 

\[
\int_{s \in \calS_{t}^{>} \setminus \calS_{t^*}^{>} } p_1(s) + q \int_{s \in \calS_{t}^{=} } p_1(s)  \leq 
\int_{s \in \calS_{t^*}^{>} \setminus \calS_{t}^{>} } p_1(s) + q^* \int_{s \in \calS_{t^*}^{=} } p_1(s)
\]

In order to show $TPR_{t}(T,q) \leq TPR_{t^*}(T^*,q^*)$ it then suffices to show

\begin{multline*}
\left( \int_{s \in \calS_{t}^{>} \setminus \calS_{t^*}^{>} } p_1(s) + q \int_{s \in \calS_{t}^{=} } p_1(s) \right)  
\cdot\left( \int_{s \in \calS_{t^*}^{>} \setminus \calS_{t}^{>} } p_0(s) + q^* \int_{s \in \calS_{t^*}^{=} } p_0(s) \right) 
\leq
\\
\left( \int_{s \in \calS_{t^*}^{>} \setminus \calS_{t}^{>} } p_1(s) + q^* \int_{s \in \calS_{t^*}^{=} } p_1(s) \right) 
\cdot\left( \int_{s \in \calS_{t}^{>} \setminus \calS_{t^*}^{>} } p_0(s) + q \int_{s \in \calS_{t}^{=} } p_0(s) \right)
\end{multline*}

We are considering the case of $\calS_t^{=} \cap \calS_{t^*}^{>} = \emptyset$ which implies that for any $s \in ( \calS_{t^*}^{>} \setminus \calS_{t}^{>} ) \cup  \calS_{t^*}^{=} $ and $s' \in (\calS_{t}^{>} \setminus \calS_{t^*}^{>} ) \cup \calS_{t}^{=}$ we must have $\frac{p_1(s)}{p_1(s) + p_0(s)} \geq \frac{p_1(s')}{p_1(s') + p_0(s')}$ due to our definition of $t^*$. We can then expand both sides of the inequality and apply Lemma~\ref{lem:fpr_tpr_helper2} to like terms on each side to get our desired inequality and therefore $TPR_{t}(T,q) \leq TPR_{t^*}(T^*,q^*)$ giving a contradiction.

Finally we consider the case of $q \in \{0,1\}$. If $q = 0$ then we can use the same reasoning by simply getting rid of the $q \int_{s \in \calS_{t}^{=} } p_0(s) $ term. If $q=1$ then we will just add $\calS_t^{=}$ to the set $\calS_t^{+}$ and again use the same reasoning. Thus in both cases $TPR_{t}(T,q) \leq TPR_{t^*}(T^*,q^*)$ giving a contradiction and completing our proof.

\end{proof}

\subsection{Helper Lemmas for Lemma~\ref{lem:fpr_tpr_main}}

Our proof of Lemma~\ref{lem:fpr_tpr_main} will also require a few more general helper lemmas.

\begin{lemma}\label{lem:fpr_tpr_simple_helper}
For any $P_1, P_2 > 0$ and $q \in (0,1)$ there must exist $q_1 > q_2$ where $q_1 P_1 + q_2 P_2 = q(P_1 + P_2)$ such that either $q_1 = 1$ and $q_2 \geq 0$ or $q_2 = 0$ and $q_1 \leq 1$

\end{lemma}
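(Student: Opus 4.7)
The plan is to prove this by constructing $q_1$ and $q_2$ explicitly in two cases, according to which of the extreme choices ($q_1 = 1$ or $q_2 = 0$) is feasible. The underlying idea is that the linear equation $q_1 P_1 + q_2 P_2 = q(P_1+P_2)$ defines a line in the $(q_1,q_2)$-plane; I want to find the point on this line that lies on one of the boundary segments $\{q_1 = 1, q_2 \in [0,1]\}$ or $\{q_2 = 0, q_1 \in [0,1]\}$ of the unit square. Since $q \in (0,1)$, the point $(q,q)$ lies on the line and inside the square, so the line must intersect one of these two boundary segments as I move in the direction that increases $q_1$ and decreases $q_2$.

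Concretely, I would split on the sign of $qP_2 - (1-q)P_1$. In the first case, assume $qP_2 \geq (1-q)P_1$. Set $q_1 = 1$ and solve for $q_2$, getting $q_2 = q - (1-q)P_1/P_2$. The hypothesis gives $q_2 \geq 0$, and since $P_1 > 0$ and $q < 1$ we have $q_2 < q < 1 = q_1$, so $q_1 > q_2 \geq 0$ as required. In the second case, assume $qP_2 < (1-q)P_1$. Set $q_2 = 0$ and solve for $q_1$, getting $q_1 = q(P_1+P_2)/P_1 = q + qP_2/P_1$. The strict inequality rearranges to $qP_2/P_1 < 1 - q$, so $q_1 < 1$; and since $q > 0$ and $P_2 > 0$ we have $q_1 > q > 0 = q_2$. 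In both cases the required equality $q_1 P_1 + q_2 P_2 = q(P_1+P_2)$ holds by construction.

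There is no real obstacle here; the statement reduces to a one-variable linear algebra check, and the only thing to be careful about is tracking the strict vs.\ weak inequalities so that the conclusion $q_1 > q_2$ is strict in both branches. The division by $P_1$ and $P_2$ is justified by the hypothesis $P_1, P_2 > 0$, and $q \in (0,1)$ ensures both $q > 0$ and $1 - q > 0$, which is what makes the two candidate extreme points lie in the interior of the appropriate boundary segments.
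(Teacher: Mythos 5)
Your proof is correct and takes essentially the same approach as the paper: both start from the candidate $q_1 = 1$, observe that the resulting $q_2$ is automatically below $q$, and fall back to $q_2 = 0$ precisely when that $q_2$ would be negative (your condition $qP_2 < (1-q)P_1$ is exactly this). You are somewhat more explicit than the paper in verifying the strict inequality $q_1 > q_2$ in both branches, which is a minor improvement in rigor but not a different argument.
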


\begin{proof}

If $q_1 = 1$ and $q_1 P_1 + q_2 P_2 = q(P_1 + P_2)$ then  $q_2 < 1$ because $P_1 + P_2 > q(P_1 + P_2)$ from the condition that $q \in (0,1)$. Further if $q_2 < 0$ when $q_1 = 1$ then this implies $P_1 > q(P_1 + P_2)$, and if $P_1 > q(P_1 + P_2)$ then there exists $q_1 \in (0,1)$ such that $q_1 P_1 = q(P_1 + P_2)$ because $P_1, P_2 > 0$.

\end{proof}

\begin{lemma}\label{lem:fpr_tpr_simple_helper2}
For any $P_1, P_2 > 0$ and $q \in (0,1)$ there must exist $q' \in (q,1)$ such that $q P_1 + P_2 = q'(P_1 + P_2)$

\end{lemma}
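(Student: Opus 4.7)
The plan is to simply solve for $q'$ explicitly and then verify the two required inequalities. Since the equation $qP_1 + P_2 = q'(P_1 + P_2)$ is linear in $q'$, and $P_1 + P_2 > 0$ by hypothesis, we can isolate
\[
q' = \frac{qP_1 + P_2}{P_1 + P_2}.
\]
This is clearly a well-defined real number, so the only content of the lemma is checking that this $q'$ lies strictly between $q$ and $1$.

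For the lower bound, I would rewrite $q(P_1+P_2) = qP_1 + qP_2$ and observe that $q' > q$ is equivalent to $qP_1 + P_2 > qP_1 + qP_2$, i.e.\ to $P_2 > qP_2$. Since $P_2 > 0$ and $q < 1$, this holds. For the upper bound, $q' < 1$ is equivalent to $qP_1 + P_2 < P_1 + P_2$, i.e.\ to $qP_1 < P_1$, which holds since $P_1 > 0$ and $q < 1$.

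There is no real obstacle here; the statement is essentially an elementary algebraic identity plus two sign checks that use exactly the hypotheses $P_1, P_2 > 0$ and $q \in (0,1)$. The lemma's role in the proof of Lemma~\ref{lem:fpr_tpr_main} is to produce the interpolation parameter needed when one modifies $t$ on $\calS_{-}$ to match a prescribed FPR, so the proof plan is just to write down the explicit formula and the two inequalities, and no additional machinery is needed.
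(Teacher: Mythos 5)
Your proposal is correct and matches the paper's argument: the paper likewise observes that $qP_1 + P_2$ lies strictly between $q(P_1+P_2)$ and $P_1+P_2$, which is exactly your two sign checks after dividing by $P_1+P_2$. Nothing is missing.
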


\begin{proof}

This follows immediately from the fact that $q P_1 + P_2 < P_1 + P_2$ and $q P_1 + P_2 > q(P_1 + P_2)$

\end{proof}

\begin{lemma}\label{lem:fpr_tpr_helper}

Given functions $p_o: \calX \rightarrow \R_{\geq 0}$ and $p_1: \calX \rightarrow \R_{\geq 0}$, let $\calX_{+} \subset \calX$ and $\calX_{-} \subset \calX$ such that $\calX_{+} \neq \emptyset$ and $\calX_{-} \neq \emptyset$ and they are disjoint so $\calX_{+} \cap \calX_{-} = \emptyset$. We further assume that for any $x \in \calX_{+}$ and $y \in \calX_{-}$ we have $p_1(x) + p_0(x) > 0$, $p_1(y) + p_0(y) > 0$  and 

\[
\frac{p_1(x)}{p_1(x) + p_0(x)} > \frac{p_1(y)}{p_1(y) + p_0(y)}
\]

For any $q_1,q_2 \in \R$ such that

\[
\int_{x\in \calX_{+}} p_0(x) + \int_{y \in \calX_{-}} p_0(x) =
q_1\int_{x\in \calX_{+}} p_0(x) + q_2\int_{y \in \calX_{-}} p_0(x)
\]

then if $q_1 > q_2$ we must have

\[
\int_{x\in \calX_{+}} p_1(x) + \int_{y \in \calX_{-}} p_1(x) <
q_1\int_{x\in \calX_{+}} p_1(x) + q_2\int_{y \in \calX_{-}} p_1(x)
\]

and the inequality is reversed if $q_2 > q_1$.

\end{lemma}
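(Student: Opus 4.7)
The plan is to introduce shorthand $A_i = \int_{\calX_{+}} p_i$ and $B_i = \int_{\calX_{-}} p_i$ for $i \in \{0,1\}$, so that the hypothesis $A_0 + B_0 = q_1 A_0 + q_2 B_0$ becomes $(q_1 - 1) A_0 = (1 - q_2) B_0$ and the desired conclusion $A_1 + B_1 < q_1 A_1 + q_2 B_1$ becomes $(q_1 - 1) A_1 > (1 - q_2) B_1$. The key algebraic ingredient will come from converting the pointwise ratio hypothesis into multiplicative form: since both denominators $p_1(x) + p_0(x)$ and $p_1(y) + p_0(y)$ are strictly positive by assumption, cross-multiplying $\frac{p_1(x)}{p_1(x)+p_0(x)} > \frac{p_1(y)}{p_1(y)+p_0(y)}$ yields $p_1(x) p_0(y) > p_1(y) p_0(x)$ strictly for every $x \in \calX_{+}$ and $y \in \calX_{-}$. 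Integrating this over the product $\calX_{+} \times \calX_{-}$ and applying Fubini gives the integrated inequality $A_1 B_0 \geq A_0 B_1$, and this inequality is strict whenever the product measure of $\calX_{+} \times \calX_{-}$ is positive.

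For the main case, in which both $A_0 > 0$ and $B_0 > 0$, I would parametrize the constraint by writing $q_1 - 1 = t B_0$ and $1 - q_2 = t A_0$ for some scalar $t$, which automatically satisfies $(q_1 - 1) A_0 = (1 - q_2) B_0$. The hypothesis $q_1 > q_2$ then becomes $t(A_0 + B_0) > 0$, forcing $t > 0$. The desired quantity simplifies as
\begin{equation*}
(q_1 - 1) A_1 - (1 - q_2) B_1 \;=\; t \bigl( A_1 B_0 - A_0 B_1 \bigr) \;>\; 0,
\end{equation*}
using the strict integrated inequality from the first paragraph.

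The main obstacle will be the boundary cases where $A_0 = 0$ or $B_0 = 0$, and the strict integrated inequality is no longer directly available. If $A_0 = 0$, then $p_0 \equiv 0$ almost everywhere on $\calX_{+}$ and the assumption $p_0 + p_1 > 0$ forces $p_1 > 0$ a.e.\ there, so the ratio $p_1(x)/(p_1(x) + p_0(x))$ equals $1$ on $\calX_{+}$; the strict pointwise hypothesis then forces $p_0(y) > 0$ on all of $\calX_{-}$, giving $B_0 > 0$. Plugging back into the constraint, $0 = (1 - q_2) B_0$ with $B_0 > 0$ forces $q_2 = 1$, and then $q_1 > q_2 = 1$ gives $q_1 - 1 > 0$, so the conclusion $(q_1 - 1) A_1 > 0 = (1 - q_2) B_1$ reduces to $A_1 > 0$, which holds because $p_1 > 0$ a.e.\ on the positive-measure set $\calX_{+}$. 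The case $B_0 = 0$ is symmetric; the fully degenerate scenario $A_0 = B_0 = 0$ would force both ratios to equal $1$, contradicting the strict pointwise hypothesis, and hence cannot occur. The reversed inequality for $q_2 > q_1$ follows from the same argument by swapping the roles of $\calX_{+}$ and $\calX_{-}$, which amounts to negating $t$ in the parametrization above.
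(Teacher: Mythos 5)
Your proof is correct and follows essentially the same route as the paper's: cross-multiply the pointwise ratio hypothesis to get $p_1(x)p_0(y) > p_1(y)p_0(x)$, integrate over $\calX_{+}\times\calX_{-}$ to obtain the strict inequality between $\int_{\calX_+}p_1\int_{\calX_-}p_0$ and $\int_{\calX_+}p_0\int_{\calX_-}p_1$, and then rearrange linearly using the constraint (the paper multiplies the integrated inequality by $q_1-q_2$ and expands, where you parametrize the constraint by $t$ — a cosmetic difference). Your explicit treatment of the degenerate cases $A_0=0$ or $B_0=0$ is extra care the paper omits, and both arguments implicitly require $\calX_{+}$ and $\calX_{-}$ to carry positive measure for the integrated inequality to be strict, which you at least flag.
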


\begin{proof}
From our assumed inequality we have for any $x \in \calX_{+}$ and $y \in \calX_{-}$ that $p_1(x) p_0(y) > p_1(y) p_0(x)$ which implies

\[
\int_{x,y \in \calX_{+} \times \calX_{-}} p_1(x)p_0(y) > \int_{x,y \in \calX_{+} \times \calX_{-}} p_1(y)p_0(x)
\]

and using the assumption that $q_1 > q_2$ we have 

\[
(q_1 - q_2)\int_{x \in \calX_{+}} p_1(x) \int_{y \in \calX_{-}} p_0(y) >
(q_1 - q_2)\int_{y \in \calX_{-}}  p_1(y) \int_{x \in \calX_{+}}  p_0(x)
\]

Expanding this and rearranging gives

\begin{multline*}
q_1 \left( \int_{x \in \calX_{+}} p_1(x) \int_{y \in \calX_{-}} p_0(y) \right)  + 
q_2 \left( \int_{y \in \calX_{-}}  p_1(y) \int_{x \in \calX_{+}}  p_0(x) \right) > 
\\
q_1 \left( \int_{y \in \calX_{-}}  p_1(y) \int_{x \in \calX_{+}}  p_0(x) \right) + 
q_2 \left( \int_{x \in \calX_{+}} p_1(x) \int_{y \in \calX_{-}} p_0(y)\right)
\end{multline*}

Further expanding gives

\begin{multline*}
\left(   q_1\int_{x\in \calX_{+}} p_1(x) + q_2\int_{y \in \calX_{-}} p_1(x)  \right) 
\left(   \int_{x\in \calX_{+}} p_0(x) + \int_{y \in \calX_{-}} p_0(x)  \right)
> 
\\
\left(    \int_{x\in \calX_{+}} p_1(x) + \int_{y \in \calX_{-}} p_1(x)   \right)    
\left(   q_1\int_{x\in \calX_{+}} p_0(x) + q_2\int_{y \in \calX_{-}} p_0(x)  \right)
\end{multline*}

and applying our assumed equality implies the desired inequality. Note that if we instead assume $q_2 > q_1$ then the proof is identical but the inequality is flipped when each side is multiplied by $(q_1 - q_2)$ because it is negative.

\end{proof}

\begin{lemma}\label{lem:fpr_tpr_helper2}

Given functions $p_o: \calX \rightarrow \R_{\geq 0}$ and $p_1: \calX \rightarrow \R_{\geq 0}$, let $\calX_{+} \subset \calX$ and $\calX_{-} \subset \calX$ such that for any $x \in \calX_{+}$ and $y \in \calX_{-}$ we have $p_1(x) > 0$, $p_1(y) > 0$  and 
$
\frac{p_1(x)}{p_1(x) + p_0(x)} \geq \frac{p_1(y)}{p_1(y) + p_0(y)}
$ then

\[
\int_{x\in \calX_{+}} p_1(x) \int_{y\in \calX_{-}} p_0(y) \geq  \int_{x\in \calX_{+}} p_0(x) \int_{y\in \calX_{-}} p_1(y)
\]

\end{lemma}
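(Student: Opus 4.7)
The plan is to reduce the integral inequality to a pointwise inequality and then integrate using the product structure. The hypothesis says that at every pair $(x,y) \in \calX_{+} \times \calX_{-}$, the ratio $p_1/(p_1+p_0)$ is at least as large on $\calX_{+}$ as on $\calX_{-}$. First I would cross-multiply this pointwise inequality. Since $p_1(x) > 0$ and $p_0(x) \geq 0$, both denominators $p_1(x)+p_0(x)$ and $p_1(y)+p_0(y)$ are strictly positive, so cross-multiplication preserves the inequality and is valid without any sign flips.

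After cross-multiplying and cancelling the common term $p_1(x)p_1(y)$ that appears on both sides, I obtain the clean pointwise inequality
\[
p_1(x)\, p_0(y) \;\geq\; p_1(y)\, p_0(x) \qquad \text{for all } (x,y) \in \calX_{+} \times \calX_{-}.
\]
The key observation is that this pointwise inequality has exactly the separable form needed for the conclusion: both sides are products where one factor depends only on $x$ and the other only on $y$.

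Next I would integrate both sides of this pointwise inequality over the product domain $\calX_{+} \times \calX_{-}$. Because the inequality holds for every $(x,y)$ in the domain (and both sides are nonnegative), the inequality is preserved under integration. Using Fubini (or simply the fact that a product of functions of separate variables integrates as the product of the individual integrals), the left-hand side becomes $\left(\int_{\calX_{+}} p_1(x)\right)\left(\int_{\calX_{-}} p_0(y)\right)$ and the right-hand side becomes $\left(\int_{\calX_{+}} p_0(x)\right)\left(\int_{\calX_{-}} p_1(y)\right)$, yielding exactly the desired inequality.

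There is no real obstacle here: the only subtlety is verifying the positivity of the denominators so that cross-multiplication is legitimate, which follows immediately from the assumption $p_1 > 0$ on both $\calX_{+}$ and $\calX_{-}$. Measurability of the sets and integrability are implicitly assumed throughout the paper's setup, so no extra work is needed on that front. This lemma is essentially the clean abstract distillation of the monotone-likelihood-ratio argument that drives the main FPR/TPR proof in Lemma~\ref{lem:fpr_tpr_main}.
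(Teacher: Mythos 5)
Your proof is correct and matches the paper's: both reduce the ratio hypothesis to the cross-multiplied pointwise inequality $p_1(x)p_0(y) \geq p_1(y)p_0(x)$ and integrate over $\calX_{+} \times \calX_{-}$. You simply spell out the denominator-positivity and Fubini steps that the paper leaves implicit.
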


\begin{proof}
This follows immediately from our assumed inequality that can be equivalently written as $p_1(x) p_0(y) \geq p_1(y)p_0(x)$ for any $x \in \calX_{+}$ and $y \in \calX_{-}$

\end{proof}

\subsection{Calibrated Precision-Recall}

Another common metric for accuracy is consideration of Precision with respect to Recall, which also has an associated AUC metric.
We additionally give the calibrated versions of these metrics and show that the optimal transformation for AUC will also optimize these metrics.

We first note that Lemma~\ref{lem:fpr_tpr_main} could have equivalently flipped FPR and TPR.

\begin{corollary}\label{cor:fpr_tpr_main}

For any $x \in [0,1]$ we have $FPR_{t^*}(TPR_{t^*}^{-1}(x)) \leq FPR_t(TPR_t^{-1}(x))$ where $t^*$ is defined the same as in Lemma~\ref{lem:fpr_tpr_main}

\end{corollary}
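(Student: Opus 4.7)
The plan is to derive this corollary from Lemma~\ref{lem:fpr_tpr_main} by exploiting the monotonicity of the parametric ROC curve rather than redoing the full casework. First I would observe that both $TPR_t(\cdot,\cdot)$ and $FPR_t(\cdot,\cdot)$ are non-decreasing in the natural ordering on pairs $(T,q)$ where we decrease $T$ and use $q$ to interpolate through the mass at $t(s)=T$. The parameter $q$ introduced in Definitions~\ref{def:calibrated_tpr}--\ref{def:calibrated_fpr} ensures that, swept jointly, $FPR_t$ and $TPR_t$ trace out a continuous curve from $(0,0)$ to $(1,1)$, so the preimages $FPR_t^{-1}(x)$ and $TPR_t^{-1}(x)$ exist for every $x\in[0,1]$.

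Next, fix $x \in [0,1]$ and let $y \defeq FPR_t(TPR_t^{-1}(x))$. With a consistent choice of parameter, $FPR_t^{-1}(y)$ and $TPR_t^{-1}(x)$ name the same point on the $t$-ROC curve, so $TPR_t(FPR_t^{-1}(y)) = x$. Applying Lemma~\ref{lem:fpr_tpr_main} at level $y$ yields
\[
TPR_{t^*}(FPR_{t^*}^{-1}(y)) \;\geq\; TPR_t(FPR_t^{-1}(y)) \;=\; x.
\]
Now because $TPR_{t^*}$ is non-decreasing along the threshold parameter, the first threshold at which $TPR_{t^*}$ reaches $x$ cannot occur later (i.e., at a higher FPR) than the threshold at which it reaches $y$ on the FPR axis. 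Concretely, $TPR_{t^*}^{-1}(x)$ corresponds to a parameter weakly above $FPR_{t^*}^{-1}(y)$, and therefore $FPR_{t^*}(TPR_{t^*}^{-1}(x)) \leq y = FPR_t(TPR_t^{-1}(x))$, giving the corollary.

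The main obstacle is the bookkeeping around the tie-breaking parameter $q$ and the non-uniqueness of the inverses $TPR^{-1}$ and $FPR^{-1}$ when $p_0$ or $p_1$ have atoms. One needs to fix a convention (for instance, taking the smallest parameter at which the prescribed value is attained) so that $FPR_t(TPR_t^{-1}(x))$ and $TPR_t(FPR_t^{-1}(y))$ genuinely refer to the same point on the $t$-curve, and so that the implication ``if $TPR_{t^*}$ at some parameter is already $\geq x$, then its FPR there is an upper bound for $FPR_{t^*}(TPR_{t^*}^{-1}(x))$'' is valid. Once this is pinned down, both directions of the monotonicity argument reduce to the basic fact that the ROC curve is a weakly increasing parametric curve, and the corollary is simply the geometric dual of Lemma~\ref{lem:fpr_tpr_main}: the $t^*$-curve lies above the $t$-curve, hence also to its left.
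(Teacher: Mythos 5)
Your proof is correct in substance but takes a genuinely different route from the paper. The paper's proof of Corollary~\ref{cor:fpr_tpr_main} is not a derivation from Lemma~\ref{lem:fpr_tpr_main} at all: it simply asserts that the entire casework of that lemma's proof can be repeated with the roles of $TPR$ and $FPR$ interchanged (thresholds defined via $TPR^{-1}$, sets $\calS^{>},\calS^{=}$ built from $p_1$ instead of $p_0$, all inequalities flipped), so the corollary is proved by symmetric repetition rather than by citation. You instead treat the lemma as a black box and obtain the corollary as its geometric dual: if the $t^*$-ROC curve weakly dominates the $t$-curve in the vertical direction at every abscissa, then by monotonicity of the parametric curve it also weakly dominates in the horizontal direction at every ordinate. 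This is more economical and arguably more illuminating, but it concentrates all the difficulty into the tie-breaking conventions for $TPR^{-1}$ and $FPR^{-1}$, which you correctly flag as the main obstacle. One caution there: the uniform ``smallest parameter'' convention you suggest in passing does not quite work. If the $t$-curve has a vertical segment at $FPR = y$, then taking the smallest parameter for $FPR_t^{-1}(y)$ lands at the bottom of that segment and gives $TPR_t(FPR_t^{-1}(y)) \leq x$ rather than $\geq x$, which makes the appeal to Lemma~\ref{lem:fpr_tpr_main} vacuous; you need the largest parameter (upper envelope) for $FPR^{-1}$ and the smallest (left envelope) for $TPR^{-1}$. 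With those choices your argument goes through, and the same looseness about inverses is already present in the paper's own definitions, so this is a refinement of bookkeeping rather than a gap in the idea.
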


\begin{proof}
The proof of this follows symmetrically to that of Lemma~\ref{lem:fpr_tpr_main} where the thresholds are now defined in terms of TPR and everything is flipped so the inequalities still hold in the correct direction.
\end{proof}

With this corollary in hand we will now define calibrated precision and recall.

\begin{definition}[Calibrated Recall]
Identical to Definition~\ref{def:calibrated_tpr}
\end{definition}

\begin{definition}[Calibrated Precision]
For a given classifier score function $\s: \calX \rightarrow \R$, and distribution $\calD_1$, along with a transformation function $t: \R \rightarrow \R$, and some $T \in \R$ and $q \in [0,1]$, we define calibrated Precision as

\[
\text{Precision}_t(T,q) = \frac{TPR_t(T,q)}{TPR_t(T,q) + FPR_t(T,q)}
\]
\end{definition}

In the same way as TPR and FPR, it is then common to compute Precision with the thresholds of Recall for some $x\in [0,1]$, which is to say that we consider $\text{Precision}_t(\text{Recall}_t^{-1}(x))$.
This yields an similar area under the curve metric for which we give the calibrated version.

\begin{definition}\label{def:pr-auc}[Calibrated PR-AUC]

For a given classifier score function $\s: \calX \rightarrow \R$, and distribution $\calD_1$, along with a transformation function $t: \R \times \Pi \rightarrow \R$, and some $T \in \R$ and $q \in [0,1]$, we define calibrated Precison-Recall AUC as

\[
\text{PR-AUC}(\s, \calD_0,\calD_1,t) = \int_0^1 \frac{x}{x + FPR_t(TPR_t^{-1}(x))} dx
\]

\end{definition}

Once again, we show that this metric is optimized by applying the optimal transformation for AUC.

\begin{lemma}

For a given classifier score function $\s: \calX \rightarrow \R$, and distributions $\calD_0$ and $\calD_1$, let $t^*:\R  \rightarrow [0,1]$ be the transformation 

\[
t^*(s) \defeq 
\frac{\prob{\xx \sim \calD_1}{\s(\xx) = s}}{\prob{\xx \sim \calD_1}{\s(\xx) = s} + \prob{\xx \sim \calD_0}{\s(\xx) = s}} 
\] 

and if ${\prob{\xx \sim \calD_1}{\s(\xx) = s} + \prob{\xx \sim \calD_0}{\s(\xx) = s}} = 0$ then let $t^*(s) = 0$.

For any function $t: \R \rightarrow \R$ we have $\text{PR-AUC}(\s, \calD_0,\calD_1,t^*) \geq \text{PR-AUC}(\s, \calD_0,\calD_1,t)$

\end{lemma}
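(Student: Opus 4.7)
The plan is to reduce this to the pointwise comparison on the ROC curve that was already established as Corollary~\ref{cor:fpr_tpr_main}, combined with elementary monotonicity of the integrand in Definition~\ref{def:pr-auc}.

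First I would invoke Corollary~\ref{cor:fpr_tpr_main}, which gives that for every $x \in [0,1]$,
\[
FPR_{t^*}(TPR_{t^*}^{-1}(x)) \;\leq\; FPR_t(TPR_t^{-1}(x)).
\]
This is the content of the previously proved optimality of $t^*$ with respect to the ROC curve, and it is exactly the ingredient we need at each point $x$.

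Next I would observe that the integrand in Definition~\ref{def:pr-auc} has the form $x \mapsto \frac{x}{x + y}$ where $y \geq 0$, and for any fixed $x \geq 0$ this is a non-increasing function of $y$ (its derivative in $y$ equals $-x/(x+y)^2 \leq 0$, and the value at $y=0$ is $1$ while it decreases to $0$ as $y \to \infty$). Applying this with $y = FPR_t(TPR_t^{-1}(x))$ and with $y = FPR_{t^*}(TPR_{t^*}^{-1}(x))$, the pointwise FPR inequality translates directly into a pointwise inequality on the Precision-at-Recall curve:
\[
\frac{x}{x + FPR_{t^*}(TPR_{t^*}^{-1}(x))} \;\geq\; \frac{x}{x + FPR_t(TPR_t^{-1}(x))}.
\]
Integrating this inequality in $x$ over $[0,1]$ and appealing to Definition~\ref{def:pr-auc} yields $\text{PR-AUC}(\s, \calD_0,\calD_1,t^*) \geq \text{PR-AUC}(\s, \calD_0,\calD_1,t)$, as required.

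There is essentially no serious obstacle here since the heavy lifting was done in Lemma~\ref{lem:fpr_tpr_main} and Corollary~\ref{cor:fpr_tpr_main}; the only minor care needed is in handling edge cases where the argument of $TPR_t^{-1}$ may not be uniquely defined, or where both the numerator and denominator might vanish at $x=0$. For the former, the extended definition of $FPR_t$ and $TPR_t$ in terms of a pair $(T,q)$ (Definitions~\ref{def:calibrated_tpr} and~\ref{def:calibrated_fpr}) makes the inverse well-defined for all $x \in [0,1]$, so the pointwise inequality holds on the full interval of integration. For the latter, the value of the integrand at isolated points does not affect the integral, so we can simply adopt the convention $0/0 = 0$ (or $1$) at $x=0$ without changing the result. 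With these conventions in place the monotonicity argument above goes through verbatim and completes the proof.
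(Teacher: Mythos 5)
Your proposal is correct and matches the paper's argument: the paper also derives this lemma directly from Corollary~\ref{cor:fpr_tpr_main} together with the non-negativity of FPR, which is exactly the monotonicity of $x/(x+y)$ in $y\geq 0$ that you spell out. You have simply written out the short pointwise-comparison-then-integrate step that the paper leaves implicit.
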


\begin{proof}
Follows immediately from Corollary~\ref{cor:fpr_tpr_main} and the fact that by definition FPR must always be non-negative.

\end{proof}

\subsection{Extension to Partition Calibrated}

All of these proofs can equivalently be extended to partition calibrated where we avoided this more general case for notational simplicity as the proofs are already quite involved.
We first note that the underlying intuition here is still equivalent in that the highest ordered outputs from the transformation will still maximize TPR while minimizing FPR.
More explicitly, if we consider the proof of Lemma~\ref{lem:fpr_tpr_main}, for the given threshold we defined the sets under which the output scores had a transformed output above or equal to that threshold.
Under the partititioning these would instead be subsets of $(\R \times \Pi)$ but could otherwise be defined in the same manner.
Furthermore all the inequality properties of these respective sets that were used to give the proof would still hold due to construction of the optimal transformation.
In particular, borrowing the same notation of the proof of Lemma~\ref{lem:fpr_tpr_main} and slightly generalizing to $p_1(s,\Pi_i)$, we would still have the same inequality properties of these $p_1(s,\Pi_i)$ for when $(s,\Pi_i)$ was from the optimal set vs the other transformation set as a result of the construction.
Additionally the helper lemmas were already generalized enough to be applied to this setting.
The rest of the proofs would then equivalently follow as they were all a straightforward application of  Lemma~\ref{lem:fpr_tpr_main}

\section{Partition Calibrated Log-loss Proofs}\label{sec:log_loss_proofs} 

In this section we will prove that the optimal transformation for calibration, and thus also AUC and other metrics as has been shown in Section~\ref{sec:auc_proofs} and~\ref{sec:fpr_tpr}, will also optimize the log-loss.
Accordingly, our heterogeneous calibration will theoretically optimize over a variety of metrics for model performance.
Additionally we will define expected calibration error later in this section for completeness.
We begin by defining log-loss and recall that we let $d:\calX \times \calY \rightarrow [0,1]$ be the density function of distribution $\calD$.

\begin{definition}\label{def:log_loss}[Log-loss]
For a given classifier score function $s: \calX \rightarrow \R$, along with distribution $ \calD$ the log loss is

\[
L(\s,\calD) = -\int_{(\xx,y) \in ({\calX},\calY)} d(\xx,y) \bigg(y \cdot \log (\sigma(\s(\xx))) + (1-y) \log (1-\sigma(\s(\xx))) \bigg)
\]

where $\sigma: \R \rightarrow [0,1]$ is the sigmoid function such that $\sigma(x) = \frac{1}{1 + e^{-x}}$

\end{definition}

In addition we will define the partition calibrated log-loss.

\begin{definition}\label{def:cal_log_loss}[Partition Calibrated Log-loss]
Given distribution $\calD$ and a partition $\Pi$ of the feature space, along with score function $s: \calX \rightarrow \R$, we let $\calS_0 \subseteq \R$ be such that $s \in \calS_0$ if and only if $\prob{(\xx,y) \sim \calD}{y=0,\s(\xx) = s} > 0$ and similarly let $\calS_1 \subseteq \R$ be such that $s \in \calS_1$ if and only if $\prob{(\xx,y) \sim \calD}{y=0,\s(\xx) = s} > 0$. For a given transformation function $t: \R \times \Pi \rightarrow [0,1]$ we define the generalized log-loss to be

\begin{align*}
&L(\s,\calD, t, \Pi) = 
\\& \int_{\Pi_i \in \Pi}
\Bigg( -\int_{s \in \calS_1} \prob{(\xx,y) \sim \calD}{y=1,\s(\xx) = s, \xx \in \Pi_i} \log (t(s)) 
- \int_{s \in \calS_0}\prob{(\xx,y) \sim \calD}{y=0,\s(\xx) = s, \xx \in \Pi_i} \log (1-t(s)) \Bigg)
\end{align*}

\end{definition}

Note that if $t$ is the sigmoid function regardless of the partition then this is equivalent to the log-loss.
It will then be straightforward to show that the optimal calibration of each partition will minimize the partition calibrated log-loss.

\begin{corollary}
Given distribution $\calD$ and our classifier score function $\s: \calX \rightarrow \R$. If we define $t^*:\calS_0 \cup \calS_1 \times \Pi \rightarrow [0,1]$ to be the transformation $t^*(s, \Pi_i) = \prob{(\xx,y) \sim \calD}{y=1 |  \s(\xx) = s, \xx \in \Pi_i}$, then for any $t:\R \times \Pi \rightarrow [0,1]$ we have $L(\s, \calD,t^*, \Pi) \leq L(\s,\calD,t,\Pi)$

\end{corollary}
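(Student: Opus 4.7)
The plan is to exploit the fact that the partition calibrated log-loss is a sum of integrals whose integrand, for each fixed pair $(s, \Pi_i)$, depends on the transformation function only through the single value $t(s, \Pi_i)$. Since these values are independent across different $(s, \Pi_i)$ pairs, the global minimization reduces to a pointwise minimization of a one-variable convex function, and it suffices to verify that the conditional probability $t^*(s, \Pi_i)$ is the unique minimizer at each point.

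First I would rewrite $L(\s, \calD, t, \Pi)$ by interchanging the order of integration to group together the $y=1$ and $y=0$ contributions at each $(s, \Pi_i)$, yielding an integrand of the form $-P_1(s,\Pi_i) \log(t(s,\Pi_i)) - P_0(s,\Pi_i) \log(1-t(s,\Pi_i))$, where $P_y(s,\Pi_i) \defeq \prob{(\xx,y') \sim \calD}{y'=y, \s(\xx)=s, \xx \in \Pi_i}$. Note that on $\calS_1 \cap \calS_0$ both coefficients are positive, while on $\calS_1 \setminus \calS_0$ (resp.\ $\calS_0 \setminus \calS_1$) only the first (resp.\ second) term is present and is minimized by $t = 1$ (resp.\ $t = 0$), which matches $t^*$ on those strata.

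Next, for the generic case where both $P_0(s,\Pi_i)$ and $P_1(s,\Pi_i)$ are positive, I would minimize the map $t \mapsto -P_1 \log t - P_0 \log(1-t)$ on $(0,1)$. Its derivative is $-P_1/t + P_0/(1-t)$, which vanishes uniquely at $t = P_1/(P_0 + P_1)$, and its second derivative $P_1/t^2 + P_0/(1-t)^2$ is strictly positive, so this critical point is the unique global minimizer on $(0,1)$. A short calculation using the definition of conditional probability shows
\[
\frac{P_1(s,\Pi_i)}{P_0(s,\Pi_i) + P_1(s,\Pi_i)} = \prob{(\xx,y) \sim \calD}{y=1 \mid \s(\xx)=s, \xx \in \Pi_i} = t^*(s, \Pi_i),
\]
so the pointwise minimizer coincides with $t^*$ at every $(s, \Pi_i)$.

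Finally, integrating the pointwise inequality over $s$ and $\Pi_i$ gives $L(\s, \calD, t^*, \Pi) \leq L(\s, \calD, t, \Pi)$ for arbitrary $t$. The only mild obstacle I anticipate is handling the measure-zero strata where $P_0$ or $P_1$ vanishes and taking care with the convention $0 \cdot \log 0 = 0$; these are straightforward but worth stating explicitly so that the pointwise minimization is unambiguously well-defined everywhere the integrand is nonzero.
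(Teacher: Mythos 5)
Your proposal is correct and follows essentially the same route as the paper's proof: reduce to pointwise minimization at each $(s,\Pi_i)$, split into the three cases depending on which of $P_0(s,\Pi_i)$ and $P_1(s,\Pi_i)$ vanish, and in the interior case take the derivative of $t \mapsto -P_1\log t - P_0\log(1-t)$ to find the critical point $P_1/(P_0+P_1) = t^*(s,\Pi_i)$. The only addition is your explicit second-derivative check that the critical point is a minimum, which the paper leaves implicit.
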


\begin{proof}
We can simply consider each $\Pi_i$ separately and show that it is optimized by $t^*$ and so we will drop the $\Pi_i$ notation for simplicity.

For any $s \in \calS_0 \cup \calS_1$ we first note that by the definition of conditional probability we have 

\[
\prob{(\xx,y) \sim \calD}{y=1 |  \s(\xx) = s} 
= \frac{\prob{(\xx,y) \sim \calD}{y=1,\s(\xx) = s}}{\prob{(\xx,y) \sim \calD}{\s(\xx) = s}} 
= \frac{\prob{(\xx,y) \sim \calD}{y=1,\s(\xx) = s}}{\prob{(\xx,y) \sim \calD}{y=1,\s(\xx) = s} + \prob{(\xx,y) \sim \calD}{y=0,\s(\xx) = s}}
\]

which we will use alternatively as the transformation and we have three cases. If $s \in \calS_0 \setminus \calS_1$ then $t^*(s) = 0$ and this minimizes $- \prob{(\xx,y) \sim \calD}{y=0,\s(\xx) = s} \log (1-t(s))$. If $s \in \calS_1 \setminus \calS_0$ then $t^*(s) = 1$ and this minimizes $- \prob{(\xx,y) \sim \calD}{y=1,\s(\xx) = s} \log (t(s))$.
Finally if $s \in \calS_0 \cap \calS_1$ then we have 

\[
\frac{dL}{dt(s)} = - \frac{ \prob{(\xx,y) \sim \calD}{y=1,\s(\xx) = s} }{t(s)} 
+ \frac{ \prob{(\xx,y) \sim \calD}{y=0,\s(\xx) = s} }{1-t(s)}
\]

Setting $\frac{dL}{dt(s)} = 0$ and solving for $t(s)$ gives our $t^*(s)$ and therefore $t^*$ minimizes our loss function.

\end{proof}

\subsection{Calibration error}\label{subsec:cal_def}

We will also define calibration error in the same way as~\cite{kumar2019verified} and is mostly standard but has been translated to our notation.
Recall that we let $d:\calX \times \calY \rightarrow [0,1]$ be the density function of distribution $\calD$.

\begin{definition}\label{def:calibration_error}[Calibration Error]
For a given classifier probability function $p: \calX \rightarrow [0,1]$, along with distribution $ \calD$, the calibration error is
\[
CE(p,\calD) = \bigg( \int_{(\xx,y) \in (\calX,\calY)} d (\xx,y)\big( p(\xx) - \prob{(\xx,y) \sim \calD}{y=1 | \xx} \big)^2 \bigg)^{\frac{1}{2}}
\]
\end{definition}

This metric cannot actually be measured empirically because $\prob{(\xx,y) \sim \calD}{y=1|\xx}$ is exactly what our model is trying to find.   Instead we need to group together all the feature vectors with the same score from our classifier function because we can get an accurate estimate on $\prob{(\xx,y) \sim \calD}{y=1|\s(\xx) = s}$ for a given score $s$ by binning the possible scores.
This gives an empirical estimate of the following definition.

\begin{definition}\label{def:expected_calibration_error}[Expected Calibration Error]
For a given classifier score function $\s: \calX \rightarrow \R$, along with distribution $ \calD$ and some calibration transformation function $t: \R \rightarrow [0,1]$, the expected calibration error of this transformation function is

\[
CE(\s,\calD,t) = \bigg( \int_{s \in \R} \prob{(\xx,y) \sim \calD}{\s(\xx) = s} \big( t(s) - \prob{(\xx,y) \sim \calD}{y=1 | \s(\xx) = s} \big)^2 \bigg)^{\frac{1}{2}}
\]

When $t$ is the sigmoid function $\sigma$ then this is the expected calibration error of our classifier function without any calibration post-processing.

\end{definition}

Note that low expected calibration error is not necessarily correlated with the performance of the classifier function. For instance if we suppose our labels are balanced, so $\prob{(\xx,y) \sim \calD}{y=1} = \prob{(\xx,y)\sim \calD}{y=0}$, and our score function is simply $\s(\xx) = 0$ for all $\xx$, then by setting $t$ as the sigmoid function $\sigma$ we get $\sigma(\s(\xx)) = \frac{1}{2}$ for all $\xx$. Accordingly we can actually see that our expected calibration error is zero and has been perfectly minimized but our classifier function is performing as poorly as possible.

As a result, by partitioning the feature space and calibrating each separately, we would not necessarily expect to improve the expected calibration error because unlike with AUC or log-loss we cannot further improve the expected calibration error by optimally calibrating each partition.
Note that optimally calibrating each partition will still theoretically improve the calibration error.




\section{Splitting criteria for optimizing AUC}\label{sec:algorithm} 

In this work we primarily focus on heterogeneity that is based upon label balance, however it could be that the heterogeneity is because different partitions are easier to classify or have higher variance.
Identifying the optimal partitioning that could improve AUC would then require a more complex algorithm than a simple decision tree and would also need to utilize the scores from the classifier function in finding the partitioning.
If the goal is to partition the feature space then using a decision tree style algorithm would be the most obvious approach.
We would then want the splitting criteria function to be more specified to the amount we can increase AUC by making that given split.
However we still want to be able to execute this tree at scale so iterating through the splits and thus computing the splitting criteria function should be done efficiently.

Even if the criteria function was just the weighted average of the AUC on each side of the split, it would still be computationally expensive as AUC computation requires sorting the data and this sorted list cannot be efficiently updated as we iterate through the possible splits. 
Standard criteria functions for CART will only require aggregate statistics that can be efficiently updated and can be used to compute simple statistics such as mean and variance.
Recall that AUC is computed from the score distributions for positive and negative labels, so for each side of the split we could implicitly track mean and variance for both the positive labels and negative labels and efficiently update these as we iterate through the possible splits.
Due to the fact that we cannot efficiently track the empirical distributions for positive and negative scores we will then approximate these distributions with normal distribution with our empirical mean and variance.

Assuming that the positive and negative score distributions follow this normal distribution would allow for efficient computation of AUC using the CDF of each distribution.
However we are specifically trying to identify partitions where the relative ordering should be changed so we do not want our splitting criteria to be the weighted average of the AUC on each side of the split.
Instead the splitting criteria would ideally be the AUC after applying calibration to each side of the split.

In order to more explicitly specify how we can apply calibration on each side of the split, we consider the split to be some partitioning $\Pi = \{\Pi_l, \Pi_r\}$. We can efficiently track the mean and variance of the respective score distributions, and we will assume they follow the normal distribution, which gives $\prob{\xx \sim \calD_0}{\s(\xx) = s| \xx \in \Pi_l} = \calN(s;\mu_{0,l},\sigma_{0,l}) $ and similarly for the other respective label and partition pairs. Each of these means and variances are estimates can be efficiently updated in the same manner as is done in CART algorithms where we track the sum of the scores, the sum of the scores squared, and the total points which can then be used to quickly compute these values, and additionally this is separately done for when label is positive or negative. Furthermore, we can efficiently the number of positive and negative labels in each side of the split which gives an estimate on $p_{l} = \prob{(\xx,y)\in \calD}{y=1 | \xx \in \Pi_l}$ and similarly for the other respective label and partition pairs. We can then use this to get an estimate of

\[
\prob{(\xx,y) \in \calD}{y=1|\s(\xx) = s, \xx \in \Pi_l} = 
\frac{p_l \calN(s;\mu_{1,l},\sigma_{1,l})}{p_l \calN(s;\mu_{1,l},\sigma_{1,l}) + (1 - p_l) \calN(s;\mu_{0,l},\sigma_{0,l})}
\]

The goal would then be to apply a calibration technique to each side of the split, then compute the AUC of the full dataset with the transformation applied. In order to keep this computationally feasible, we will apply Platt scaling which is just logistic regression, and thus would just apply a linear transformation to each side of the split. For a normal distribution we have that $\prob{\xx \sim \calD_0}{a_l\s(\xx) + b_l = s| \xx \in \Pi_l} = \calN(s;a_l\mu_{0,l} + b_l, a_l \sigma_{0,l}) $. Accordingly, the logistic regression would try perfectly calibrate the partition which is equivalent to solving for $a_l,b_l$ such that for any $s \in \R$ we have

\[
s = \sigma^{-1} \left(
\prob{(\xx,y) \in \calD}{y=1|a_l\s(\xx) + b_l = s, \xx \in \Pi_l} \right)
\]

This can then be reduced to

\[
s = \log \left(  \frac{p_l \calN(s;a_l\mu_{1,l} + b_l,a_l \sigma_{1,l})}{(1 - p_l) \calN(s;a_l\mu_{0,l} + b_l,a_l \sigma_{0,l})} \right) 
\]

Plugging in the normal distribution and applying cancellation gives

\[
    \frac{p_l \calN(s;a_l\mu_{1,l} + b_l,a_l \sigma_{1,l})}{(1 - p_l) \calN(s;a_l\mu_{0,l} + b_l,a_l \sigma_{0,l})} = 
    \frac{p_l \sigma_{0,l}}{(1-p_l)\sigma_{1,l}} e^{-\frac{1}{2}\left( \frac{s - a_l \mu_{1,l} - b_l}{a_l\sigma_{1,l}} \right)^2 + \frac{1}{2}\left( \frac{s - a_l \mu_{0,l} - b_l}{a_l\sigma_{0,l}} \right)^2}
\]

Therefore, we want to solve for $a_l,b_l$ such that for all $s \in \R$

\[
s + \log \left( \frac{(1-p_l)\sigma_{1,l}}{p_l \sigma_{0,l}}  \right) =
-\frac{1}{2}\left( \frac{s - a_l \mu_{1,l} - b_l}{a_l\sigma_{1,l}} \right)^2 + \frac{1}{2}\left( \frac{s - a_l \mu_{0,l} - b_l}{a_l\sigma_{0,l}} \right)^2
\]

There may not be $a_l,b_l$ such that this holds for all $s \in \R$ and the logistic regression would find the closest possible fit. We will approximate this by taking two values $s_1,s_2$ close to $\log(\frac{p_l}{1-p_l})$, which is where the distributions should have the same probability, and solve for $a_l,b_l$ with the associated quadratic equations.

Accordingly we would now have the calibrated distributions on each side of the split where $t_l(s) = a_l s + b_l$ and $t_r(s) = a_r s + b_r$, which would give $\prob{\xx \sim \calD_0}{t_l(\s(\xx)) = s| \xx \in \Pi_l} = \calN(s;a_l\mu_{0,l} + b_l, a_l \sigma_{0,l})$ and $\prob{\xx \sim \calD_0}{t_r(\s(\xx)) = s| \xx \in \Pi_l} = \calN(s;a_r\mu_{0,r} + b_r, a_r \sigma_{0,r})$, and identically for positive labels.

For a given threshold $T$ we could then use the CDF of the normal distribution to compute the TPR of the full partition as 

\[
\int_{T}^{\infty} \bigg( \frac{p_l}{p_l + p_r} \calN(s;a_l\mu_{1,l} + b_l, a_l \sigma_{1,l}) 
+ \frac{p_r}{p_l + p_r} \calN(s;a_r\mu_{1,r} + b_r, a_r \sigma_{1,r}) \bigg) ds
\]

This would equivalently be done for FPR and would allow us to more efficiently compute the AUC for the calibrated partitions. 
As such we would have the approximate splitting criteria for maximizing AUC that could be efficiently computed.

\section{Applying Gradient Boosted Trees to Heterogeneous Calibration Framework}\label{sec:boosted} 

In this section we discuss applying boosted trees to the heterogeneous calibration framework.
We leave the full details to future work, but sketch out how this framework can utilize gradient boosted trees for completeness.

For simplicity we will restrict the number of estimators to 2 for the gradient boosted tree. 
Accordingly, it will give a sequence of two partitions of the feature space where we let $\Pi^1, \Pi^2$ be the partitioning of the first and second estimator respectively.
We also assume that the calibration is Platt scaling (logistic regression) for simplicity.
In the same way that Algorithm~\ref{algo:hetcal} doesn't use the probability predictions of the decision but only the generated partitioning, we will only use the partitions $\Pi^1, \Pi^2$.
The high-level idea is that calibration will be applied to all the partitions of $\Pi^1$ separately and then we calibrate these new scores separately for all the partitions of $\Pi^2$.
The intuition for applying this sequential calibration still follows similarly where boosted trees capture more granular heterogeneity but do so in a way that avoids building high depth trees and thus losing empirical accuracy.
This avoidance will also help maintain accuracy for the successive calibration calls instead of giving a more fine-grained partitioning.

Let's denote our heterogeneous calibration as the transformation $t$ such that for any feature vector $\xx \in \Pi_i$ we output  $t(\s(\xx),\Pi_i)$, where $\s$ is the classifier score function of the respective DNN being calibrated.
This sequential calibration will then give the full transformation of $t_2(t_1(\s (\xx), \Pi^1_i),\Pi^2_j)$ where $\xx \in \Pi^1_i$ and $\xx \in \Pi^2_j$.
If we assume that the calibration is Platt scaling then it will take the form of $\sigma(a_{2,j}(a_{1,i} \s (\xx) + b_{1,i}) + b_{2,j}) $, where we note that Platt scaling should be applied to scores that have not undergone the sigmoid transformation.
Thus, the second layer of Platt scaling is applied to the scores $a_{1,i} \s (\xx) + b_{1,i}$ for each $\xx \in \Pi^1_i$.
This can then easily be extended to a longer sequence of partitions.


We also discussed in Section~\ref{subsec:interpolation} how the calibration could be seen as an interpolation between DNNs and boosted trees. It is clear from our example that if the logistic regression always gives the identity as it's respective linear function then this will simply output our original DNN. 
The fact that it can also capture the original gradient boosted tree is slightly more difficult.
The gradient boosted tree model has respective constants $c_{1,i}$ and $c_{2,j}$ for each partition $\Pi^1_i \in \Pi^1$ and $\Pi^2_j \in \Pi^2$, and gives probability output estimate $c_{1,i} + c_{2,j}$ for $\xx \in \Pi_i^1 \cap \Pi_j^2$.
The subsequent constants represent the residual error of the composition of the previous partitions.
Our calibration then needs to be able to mimic this addition of constants, which is to say that we want $t_1(s, \Pi^1_i) = c_{1,i}$ and $t_2(s,\Pi^2_j) = s + c_{2,j}$ for any score $s \in \R$.
Due to the non-linearity of the sigmoid function on the final layer, this cannot necessarily be done through Platt scaling.
However we note that Platt scaling also doesn't generally allow for perfect calibration and is simply a good proxy and especially useful when there are few samples due to it's simplicity.
Furthermore, this type of tranformation can easily be captured through another common calibration technique, isotonic regression.

As such, calibration can also be seen as an interpolation between DNNs and gradient boosted trees.
The optimality analysis should also extend similarly when we instead consider this sequence of partition, but we leave the full details to future work.

\section{More Experimental results}\label{sec:experiments_app} 

\begin{table*}[!ht]
\begin{tabular}{lllllll}
\hline
Size                        & Model                        & Bank Marketing  & Census data & Credit Default & Higgs data  & Diabetes \\ \hline
S  & Reg DNN               & 0.7758           & 0.8976           & 0.7781   & 0.7801     & 0.6693  \\
                            & Reg HC           &  0.7816 (+0.76\%)              &   0.9021 (+0.50\%)     &  0.7793 (+0.16\%)      &  0.7816 (+0.19\%)        &  0.6829 (+2.04\%)    \\ \cline{2-7} 
                            & Unreg DNN     &   0.7735               &        0.8773        &   0.7768     &    0.7498      &  0.6915     \\
                            & Unreg HC &   0.7804 (+0.88\%)             &   0.8985 (+2.42\%)     &  0.7787 (+0.25\%)     &   0.7588 (+1.20\%)       &  0.6937 (+0.32\%)     \\ \hline \hline
M & Reg DNN               & 0.7712           & 0.8978            & 0.7787    & 0.7773      & 0.6951   \\
                            & Reg HC           &  0.7800 (+1.14\%)     &          0.9027 (+0.55\%)      &  0.7794 (+0.09\%)      &  0.7799 (+0.33\%)        &   0.6832 (+3.65\%)    \\ \cline{2-7} 
                            & Unreg DNN    &   0.7676     &           0.8644     & 0.7768      &  0.7477        &   0.6744   \\
                            & Unreg HC &  0.7770 (+1.22\%)   &              0.8994 (+4.05\%)  &  0.7786 (+0.23\%)      &    0.7586 (+1.45\%)      &   0.6856 (+1.66\%)    \\ \hline \hline
L  & Reg DNN              & 0.7707           & 0.9007            & 0.7782    & 0.7747      & 0.6567   \\
                            & Reg HC           &    0.7809 (+1.32\%)            &    0.9027 (+0.22\%)            &  0.7795 (+0.17\%)      &   0.7775 (+0.36\%)       &  0.6824 (+3.91\%)     \\ \cline{2-7} 
                            & Unreg DNN     &  0.7642              &            0.8437    &  0.7771     &        0.7487  &  0.6679    \\
                            & Unreg HC & 0.7770 (+1.67\%)               &      0.8980 (+6.44\%)          & 0.7787n(+0.21\%)       & 0.7595 (+1.44\%)         &  0.6824 (+2.17\%)     \\ \hline \hline
\end{tabular}
\caption{Test effect of regularization on AUC-ROC (mean of 5 runs) on different datasets before and after calibration. DNN = Deep neural network, HC = Heterogeneous calibration, Reg = regularized model, Unreg = unregularized model. We report model performance on the top 3 variants without regularization as well as the top 3 variants with regularization for each model, where top 3 is determined by DNN performance prior to HC}
\label{tab:reg_results_appendix}
\end{table*}

\subsection{Datasets and Pre-processing details}
\label{app:datasets}
Table~\ref{tab:datasetstats} contains the training/validation/calibration/test splits of the 5 datasets used in our experiments. Note that we use only $50,000$ datapoints from the original Higgs dataset.

Dataset pre-processing details can be found below:
\begin{itemize}
    \item \textit{Bank Marketing} - We remove the \textbf{duration} and \textbf{day} features. All categorical features are one-hot encoded, except for \textbf{month} and \textbf{job} for which we use dense embeddings. 
    \item \textit{Census Income} - The target label is whether income exceeds \$50k or not. All categorical features are one-hot encoded, except for \textbf{education}, \textbf{occupation} and \textbf{native country} for which we learn embeddings. 
    \item \textit{Credit Default} - The target label is whether a customer will default on payment next month or not. All features are either numerical or categorical (one-hot encoded).
    \item \textit{Higgs} - The target label is whether process is a background process or a Higgs boson producing signal process. All features are numerical.
    \item \textit{Diabetes} - The target label is whether a patient is readmitted to the hospital or not. We drop sparse features like \textbf{encounter id}, , \textbf{patient nbr}, \textbf{payer code}, \textbf{weight}, \textbf{max glu serum} and \textbf{A1Cresult}. For categorical features, we use embeddings for \textbf{medical specialty}, \textbf{discharge disposition id}, \textbf{admission source id}, \textbf{diag 1}, \textbf{diag 2}, \textbf{diag 3} and one-hot encoding for the rest.
\end{itemize}


\begin{table*}[h!]

  \begin{center}

    \begin{tabular}{l|c|c|c|r}
      \toprule 
      \textbf{Dataset} & \textbf{n(train)} & \textbf{n(validation)} & \textbf{n(calibration)} & \textbf{n(test)}\\
      \midrule 
      Bank Marketing & 29387 & 5322 & 5222 & 5222\\
      Census Income & 26048 & 6513 & 8141 & 8140\\
      Credit Default & 16500 & 4590 & 4455 & 4455\\
      Higgs & 35000 & 5100 & 4950 & 4950\\
      Diabetes & 76324 & 8650 & 8396 & 8396\\
      \bottomrule 
    \end{tabular}
  \end{center}
  \caption{Datasets splits for experiments.}
  \label{tab:datasetstats}


\end{table*}


\subsection{Model and hyperparameter details}
\label{app:modelhp}

\textbf{Featurization} For categorical features, we either use one-hot vectors or use embedding tables with a fixed dimension of 8. The input to the models then becomes a vector of fixed length.

\textbf{Model type and size} We use multilayer perceptrons with 3 layers. We use 3 model sizes, viz. small, medium and large. The number of neurons in each layer is $64 * i$, $32 * i$ and $16 * i$, where $i \in \{1, 2 , 4\}$ and each value of $i$ maps to the 3 aforementioned model sizes. Each layer uses the ReLU nonlinearity~\cite{nair2010rectified}. 

\textbf{Hyperparameters and regularization} We use a fixed batch size of 128 and train all models for 100 epochs each. For each model variant, we sweep learning rate over the set $\{5e-6, 1e-5, 5e-5, 1e-4, 5e-4, 1e-3, 5e-3, 1e-2\}$. We perform model selection by choosing the model with the best validation AUC over 100 epochs.

For each model size, we consider 3 regularization variants:

\begin{itemize}
    \item No regularization
    \item Batch normalization after each feedforward layer
    \item Dropout (with a fixed rate of 0.25) after each feedforward layer
\end{itemize}

For the credit default dataset, we additionally use a batch normalization layer at the input layer to improve the stability of training. 

Combining model size and regularization yields a total of 9 variants for each dataset. We conduct 5 runs per variant and report the mean of the runs.

\subsection{Decision tree and Platt scaling details}
\label{app:dt_and_platt_scaling}

For training a decision tree over the training data, we try 2 values each for the 2 hyperparameters - maximum tree depth (chosen from the set $\{3, 4\}$) and the minimum number of samples required for a node to be at a leaf node (chosen from the set $\{1000, 2000\}$). We note consistent lifts in performance despite devoting minimal effort to tuning the tree algorithm.

Platt scaling is performed on each partition of the calibration data by training a logistic regression on each partition's neural network logit scores.

\subsection{More results}
\label{app:more_results}

We further analyze the variation in lift seen from our heterogeneous calibration and provide box plots for all 5 runs of each of the top 3 models for a given size, where the metric considered is the percent lift in AUC.
As can be seen in these plots, even for the datasets where the lift is comparatively smaller we still see that it consistently improves the AUC across runs.

Figures~\ref{fig:box_plot_bm}-~\ref{fig:box_plot_higgs} contain box plots for AUC lift for heteregeneous calibration across the top 3 models for all 3 model sizes.
\begin{figure}[h!]
\centering
\includegraphics[width=0.45\textwidth]{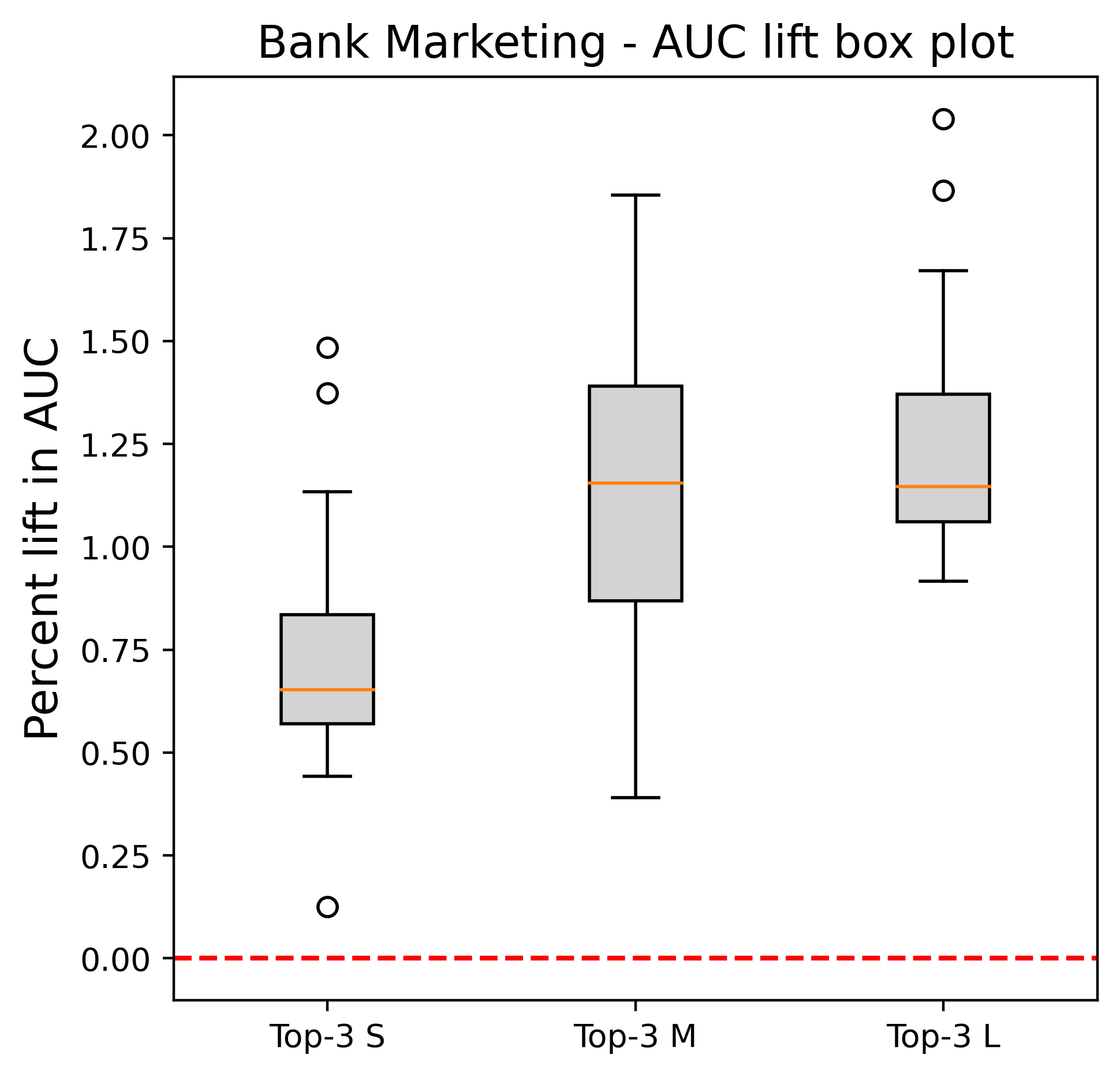}
\caption{Box plot for Bank Marketing - Test AUC lifts of our method for various runs of the top 3 models across model sizes.}
\label{fig:box_plot_bm}
\end{figure}

\begin{figure}[h!]
\centering
\includegraphics[width=0.45\textwidth]{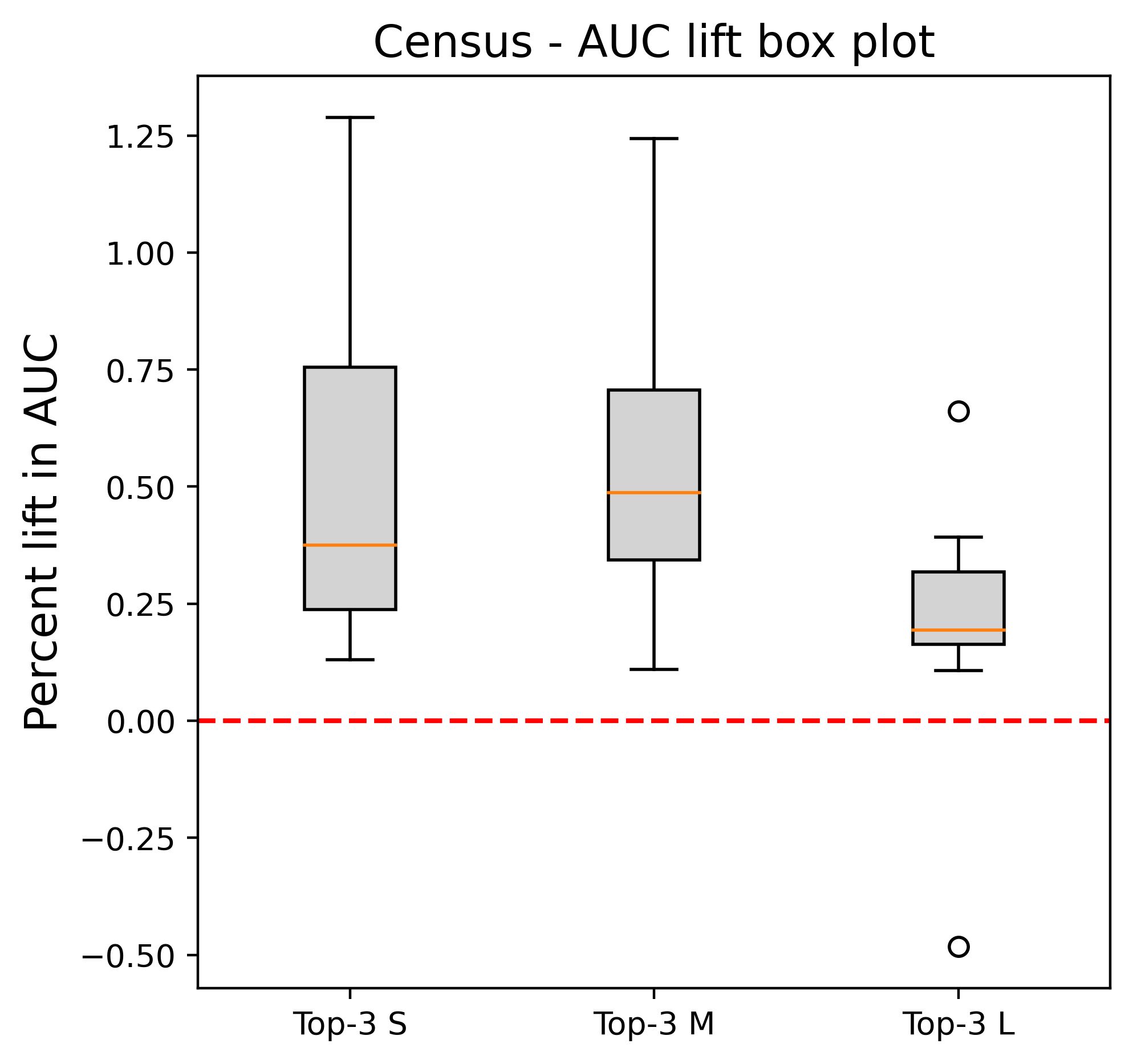}
\caption{Box plot for Census - Test AUC lifts of our method for various runs of the top 3 models across model sizes.}
\label{fig:box_plot_census}
\end{figure}

\begin{figure}[h!]
\centering
\includegraphics[width=0.45\textwidth]{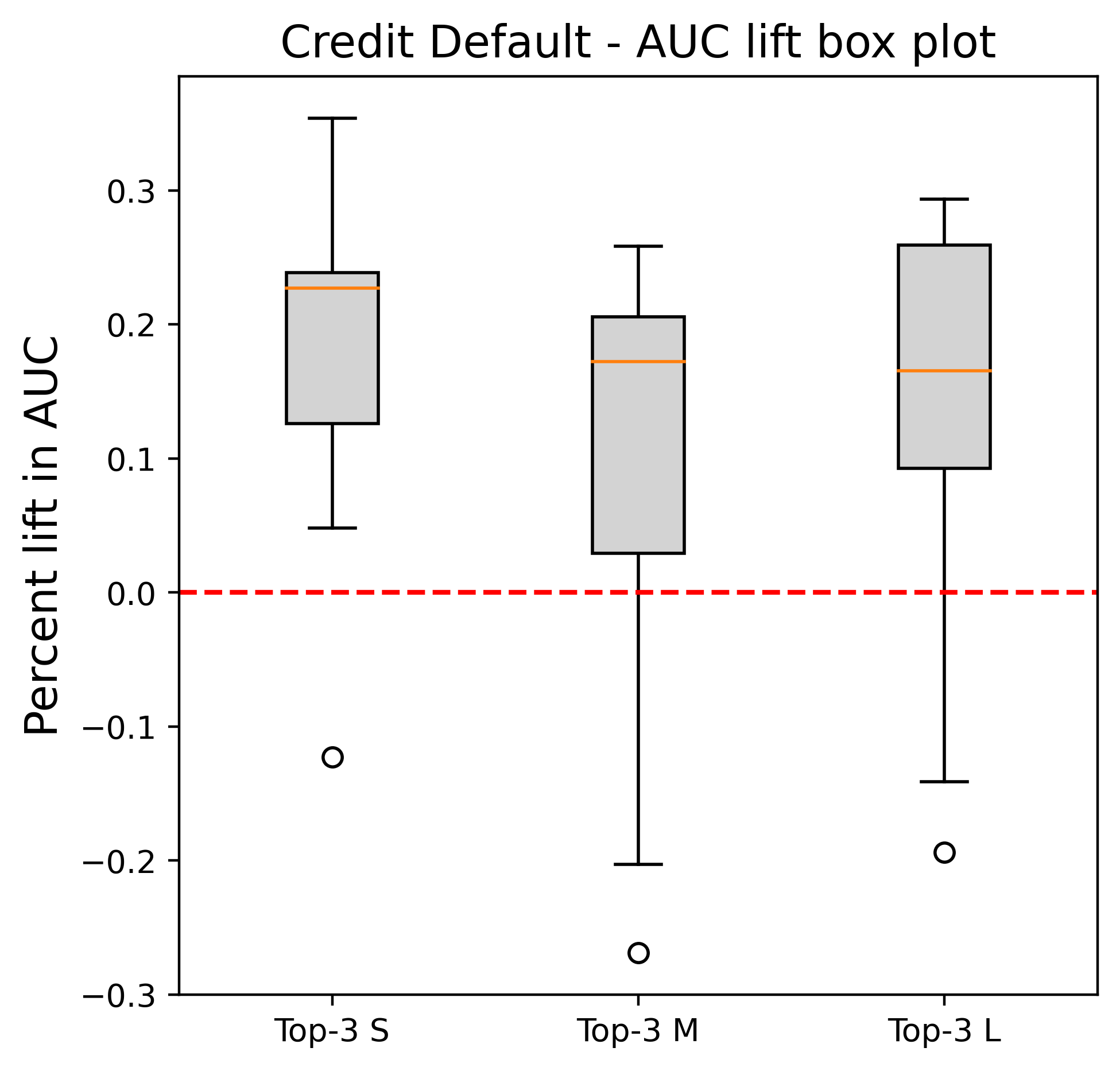}
\caption{Box plot for Credit Default - Test AUC lifts of our method for various runs of the top 3 models across model sizes.}
\label{fig:box_plot_credit}
\end{figure}

\begin{figure}[h!]
\centering
\includegraphics[width=0.45\textwidth]{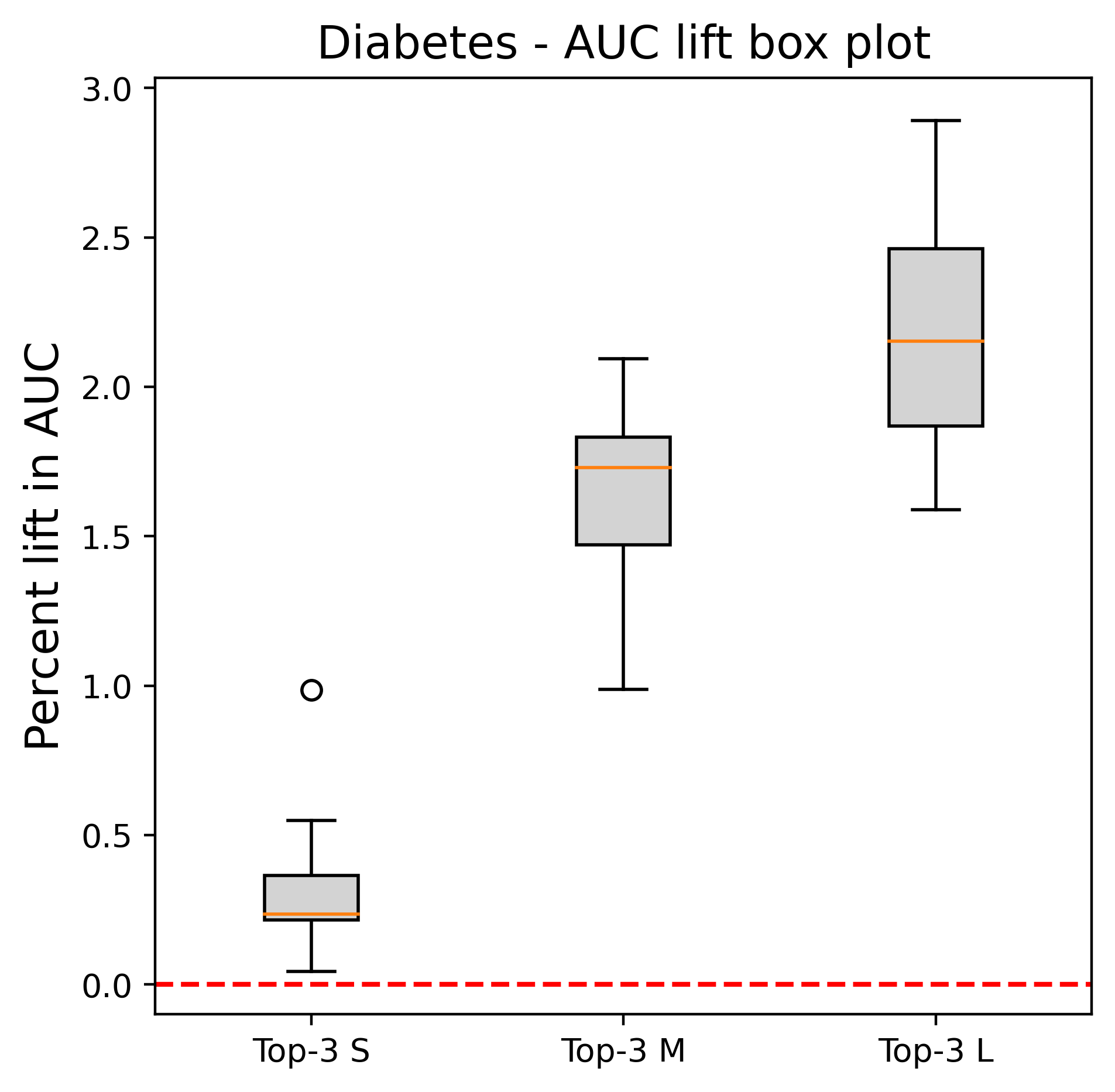}
\caption{Box plot for Diabetes - Test AUC lifts of our method for various runs of the top 3 models across model sizes.}
\label{fig:box_plot_diabetes}
\end{figure}

\begin{figure}[h!]
\centering
\includegraphics[width=0.45\textwidth]{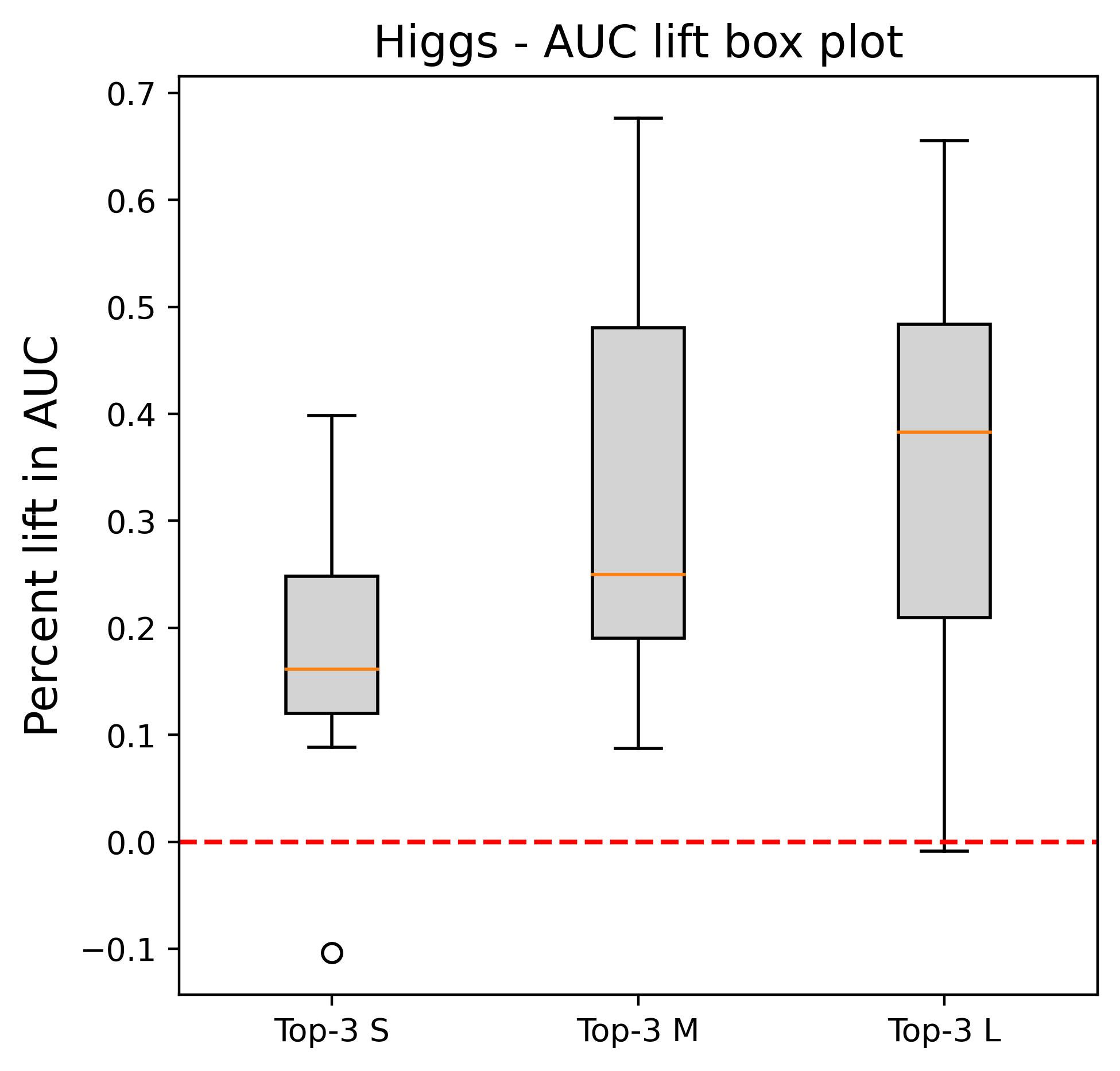}
\caption{Box plot for Higgs - Test AUC lifts of our method for various runs of the top 3 models across model sizes.}
\label{fig:box_plot_higgs}
\end{figure}

\end{document}